\definecolor{pearThree}{HTML}{FF69B4}
\definecolor{pearDark}{HTML}{2980B9}
\definecolor{pearDarker}{HTML}{F330DB}
\newtheorem{theorem}{Theorem}
\newtheorem{lemma}[theorem]{Lemma}
\newtheorem{definition}[theorem]{Definition}
\DeclareMathOperator*{\argmax}{arg\,max}
\DeclareMathOperator*{\argmin}{arg\,min}
\newcommand{\II}[1]{\mathbbm{1}_{\left\{#1\right\}}}
\newcommand{\R}{\mathbb{R}}
\newcommand{\NN}{{\mathbb N}}
\newcommand{\E}{\mathbb{E}}
\newcommand{\EE}[1]{\mathbb{E}\left[#1\right]}
\newcommand{\PP}[1]{\mathbb{P}\left[#1\right]}
\newcommand{\norm}[1]{\left\|#1\right\|}
\newcommand{\abs}[1]{\left|#1\right|}
\newcommand*{\eqdef}{\triangleq}
\newcommand{\cA}{\mathcal{A}}
\newcommand{\cB}{\mathcal{B}}
\newcommand{\cC}{\mathcal{C}}
\newcommand{\cF}{\mathcal{F}}
\newcommand{\cI}{\mathcal{I}}
\newcommand{\cL}{\mathcal{L}}
\newcommand{\cO}{\mathcal{O}}
\newcommand{\tcO}{\widetilde{\cO}}
\newcommand{\cT}{\mathcal{T}}
\newcommand{\bc}{{\bf c}}
\newcommand{\bW}{{\bf W}}
\newcommand{\by}{{\bf y}}
\newcommand{\bx}{{\bf x}}
\newcommand{\balpha}{{\boldsymbol \alpha}}
\newcommand{\blambda}{{\boldsymbol \lambda}}
\newcommand{\bnu}{{\boldsymbol \nu}}
\newcommand{\bomega}{{\boldsymbol \omega}}
\newcommand{\bell}{{\boldsymbol \ell}}
\newcommand{\nothere}[1]{}
\newcommand{\dist}[2]{\nu^{(#1)}_{#2}}
\newcommand{\loss}[2]{\ell^{(#1)}_{#2}}
\newcommand{\bloss}[1]{\boldsymbol{\ell}^{(#1)}}
\newcommand{\hloss}[2]{\hat{\ell}^{(#1)}_{#2}}
\newcommand{\lstar}[2]{\ell^{(#1)}_{#2|\star}}
\newcommand{\blstar}[2]{\boldsymbol{\ell}^{#1}_{#2|\star}}
\newcommand{\hblhstar}[2]{\hat{\boldsymbol{\ell}}^{#1}_{#2|\hat{\star}}}
\newcommand{\lstarC}[2]{\ell^{#1}_{#2|\star}}
\newcommand{\wloss}[1]{\ell_{#1}}
\newcommand{\wlstar}[1]{\ell_{#1|\star}}
\newcommand{\hhwlstar}[1]{\hat{\ell}_{#1|\hat{\star}}}
\newcommand{\hhlstar}[2]{\hat{\ell}^{(#1)}_{#2|\hat{\star}}}
\newcommand{\Loss}[2]{L^{(#1)}_{#2}}
\newcommand{\Lstar}[2]{L^{(#1)}_{#2|\star}}
\newcommand{\bLoss}[1]{\boldsymbol{L}_{#1}}
\newcommand{\bl}{{\boldsymbol{\ell}}}
\newcommand{\y}[3]{y^{(#1)}_{#2,#3}}
\newcommand{\wy}[2]{y_{#1,#2}}
\renewcommand{\b}[1]{\mathbb{#1}}
\newcommand{\hi}{\hat{\imath}}
\newcommand{\s}[1]{\sqrt{#1}}
\newcommand{\Tstar}{T^\star}
\newcommand{\istar}{i^\star}
\newcommand{\Alt}{\mathrm{Alt}}
\newcommand{\Alti}[2]{{\lambda}^{(#1)}_{#2}}
\newcommand{\LCB}{\mathrm{LCB}}
\newcommand{\DT}{\texttt{D-Tracking}\xspace}
\newcommand{\CP}{\textcolor{red}{\texttt{CP}}\xspace}
\newcommand{\CG}{\textcolor{red}{\texttt{CG}}\xspace}
\newcommand{\AH}{\texttt{AdaHedge}\xspace}
\begin{document}

\twocolumn[

\aistatstitle{Stochastic bandits with vector losses: Minimizing $\ell^\infty$-norm of relative losses}

\aistatsauthor{ Xuedong Shang \And Han Shao \And Jian Qian }

\aistatsaddress{ Inria Lille, SequeL Team \And Toyota Technological Institute at Chicago \And MIT } ]

\begin{abstract}
Multi-armed bandits are widely applied in scenarios like recommender systems, for which the goal is to maximize the click rate. However, more factors should be considered, e.g., user stickiness, user growth rate, user experience assessment, etc. In this paper, we model this situation as a problem of $K$-armed bandit with multiple losses. We define relative loss vector of an arm where the $i$-th entry compares the arm and the optimal arm with respect to the $i$-th loss. We study two goals: (a) finding the arm with the minimum $\ell^\infty$-norm of relative losses with a given confidence level (which refers to fixed-confidence best-arm identification); (b) minimizing the $\ell^\infty$-norm of cumulative relative losses (which refers to regret minimization). For goal (a), we derive a problem-dependent sample complexity lower bound and discuss how to achieve matching algorithms. For goal (b), we provide a regret lower bound of $\Omega(T^{2/3})$ and provide a matching algorithm.

\end{abstract}

\section{Introduction}\label{sec:intro}


Multi-armed bandit is a classical sequential decision-making problem, where an agent/learner sequentially chooses actions (also called ``arms'') and observes a stochastic scalar loss of the chosen arm for $T$ rounds~\citep{thompson1933}. The two classical goals are to identify the best arm (which is the arm with the minimum expected loss) and to minimize the cumulative losses. Practical applications of multi-armed bandit, among many others, range from recommendation systems~\citep{zeng2016online}, clinical trials~\citep{durand2018contextual} to portfolio management~\citep{huo2017risk}. For example, when a user comes to an e-commerce website, traditional recommender systems choose a product (an arm) to recommend and observe whether the user clicks it or not (the loss). However, in addition to click rates, other factors like user stickiness should be considered as well in practice. Another example is fairness in public policy making, where each policy (an arm) can have drastic impacts over different gender/race groups (vector losses). These problems can be modeled as multi-armed bandits with \emph{vector/multi-dimensional} losses. In each dimension $i$, we measure the performance of an arm $k$ by comparing its $i$-th loss with the minimum loss among the $i$-th dimension, which we call relative loss\footnote{One may notice that the relative loss coincides with the traditional definition of regret in the scalar case.}. 

We provide a simple problem instance shown in Table~\ref{tab:instance1} to explain the intuition on how our setting differs from the usual one: the bandit model contains 3 arms, each row corresponds to the loss vector incurred by playing each arm and each column corresponds to the vector of absolute $i$-th losses for each arm. In this example, the optimal arm with respect to each arm has zero-loss, thus the relative losses coincide with the absolute losses. The optimal arm would be arm $3$ since it minimizes the maximum of each row. We formalize the intuition later in Section~\ref{sec:formulation}.

\begin{wrapfigure}[8]{r}{0.2\textwidth}
    \flushleft
    \begin{minipage}{0.2\textwidth}
        \vspace{-30pt}
        \begin{table}[H]
        \centering
        \caption{An instance.}
        \begin{tabular}{c|c|c}
        \toprule
        arms & $\bloss{1}$ & $\bloss{2}$\\
        \cmidrule{1-3}
        arm $1$ & $1$ & $0$\\
        \cmidrule{1-3}
        arm $2$ & $0$ & $1$\\
        \cmidrule{1-3}
        arm $3$ & $1/2$ & $1/2$\\
        \bottomrule
        \end{tabular}
        \label{tab:instance1}
        \end{table}
    \end{minipage}
    \vspace{-20pt}
\end{wrapfigure}

In this paper, we study both the two classical goals under the vector-loss setting: best-arm identification and regret minimization.

 Best-arm identification, as a particular type of \emph{pure exploration}, only cares about identifying the optimal arm given some stopping criterion. Two kinds of stopping criterion exist: (a) \emph{fixed-budget} for which the algorithm stops when a given budget is exhausted~\citep{bubeck2009pure,audibert2010budget,gabillon2012ugape,karnin2013sha,carpentier2016budget}; (b) \emph{fixed-confidence} for which the algorithm stops when we are able to spot the best arm with a high confidence level~\citep{even-dar2003confidence,kalyanakrishnan2012lucb,gabillon2012ugape,jamieson2014lilucb,garivier2016tracknstop,qin2017ttei,yu2018heavy,degenne2019game,menard2019lma,shang2020t3c}. In this paper, we focus on the second type and the detailed setting is described in Section~\ref{sec:formulation.bai}.

Contrary to best-arm identification, the objective of regret minimization, as indicated by its name, is to minimize the \emph{regret}: the gap between the total reward gathered by the agent and the cumulative reward obtained by optimal strategy. Regret minimization naturally balances between exploration and exploitation. An asymptotic lower bound on the regret is given by~\cite{lai1985}. Since then the problem has been extensively studied. Typical solutions include optimistic algorithms~\citep{auer2002ucb,cappe2013klucb,honda2015imed}, their Bayesian competitor Thompson sampling~\citep{thompson1933,kaufmann2012thompson,agrawal2013further,korda2013thompson}, and non-parametric methods~\citep{baransi2014besa,chan2020ssmc,baudry2020}. In our paper, the objective is somehow different. We aim to minimize the $\ell^\infty$-norm of cumulative relative loss, which requires a more specific definition of regret that we give in Section~\ref{sec:formulation.regret}.


\paragraph{Related work.}

Vector payoffs/losses, as a core ingredient of this work, mostly finds its popularity among literature of online learning, in particular in a game theory point of view. The problem is closely related to multi-objective optimization where the use of \emph{Blackwell approachability} has been thoroughly investigated both for the \emph{full information} setting~\citep{perchet2014blackwell} and the \emph{partial monitoring} setting~\citep{kwon2017blackwell,perchet2011blackwell}. The very same problem is less studied for multi-armed bandit. To the best of our knowledge, minimizing the $\ell^\infty$-norm of (cumulative) relative loss has never been looked into in the bandit literature. For best-arm identification, a related setting refers to to~\cite{kats-samuels2019top}, where the feedback is also multi-dimensional, but the goal is constrained maximization. For regret minimization, the most similar setting to ours is the multi-objective multi-armed bandit that considers conflicting sub-objectives. It is first proposed by~\cite{drugan2013} and~\cite{zuluaga2013}, and is followed by a series of extensions~\citep{auer2016pareto,drugan2014,lu2019glb}. Multi-objective multi-armed bandit aims to find the \emph{Pareto frontier} of different sub-objectives, while our setting only cares about the maximum. For example, arm $1$ and arm $2$ in Table~\ref{tab:instance1} are on the Pareto frontier as well, but do not achieve optimality in our definition.

\paragraph{Contributions.}

The contributions of this paper are the following:
\begin{itemize}
    \item We describe a novel multi-armed bandit setting with $d$-dimensional vector losses and we study the problem in both best-arm identification and regret minimization. We design the performance measure as minimizing the $\ell^\infty$-norm of relative loss over all single dimensions. 
    \item We first investigate best-arm identification. We derive a \emph{problem-dependent} lower bound on the sample complexity and discuss how to achieve matching algorithms for \emph{fixed-confidence} best-arm identification.
    \item We then study regret minimization. We show that any algorithm suffers a \emph{worst-case} regret of order $\Omega(T^{2/3})$ under our setting. We provide an algorithm based on \emph{two-player game} with matching upper regret bound up to a log factor.
    
\end{itemize}


\paragraph{Outline.}

The rest of the paper is organized as follows. We start by the problem formulation in Section~\ref{sec:formulation} where we specify both best-arm identification and regret minimization under our setting. We first study best-arm identification in Section~\ref{sec:bai} for which we focus on the sample complexity. It then follows regret minimization in Section~\ref{sec:lb} where we provide the worst-case lower bound along with a simple matching algorithm before we conclude.


\section{Problem formulation}\label{sec:formulation}

Our model $\bnu$ for the environment is a $K$-armed bandit with \emph{unknown} vector payoffs, i.e., vector loss distributions $(\dist{1}{k},\ldots,\dist{d}{k})_{k\in [K]}$ where $\dist{i}{k}$ is the $i$-th sub-(scalar) loss distribution for the $k$-th arm. Each distribution $\dist{i}{k}$ is from a known sub-Gaussian canonical exponential family with one parameter (the mean of the distribution) for all $i$ and $k\in [K]$.

We consider two mainstream multi-armed bandit frameworks in this paper (see~\citealt{kaufmann2017survey} for a survey), namely best-arm identification and regret minimization. In both settings, a learning algorithm $\cA$ selects an arm $\cA_t\in [K]$ at each round $t=1,\ldots,T$, and then observes a loss vector of arm $k$: $\wy{\cA_t}{t}\sim (\dist{1}{\cA_t},\ldots,\dist{d}{\cA_t})$. Let $\cF_{t}=\sigma (\cA_1,y_{\cA_1,1},\ldots, \cA_t,\wy{\cA_t,t})$ be the information available to the algorithm after $t$ rounds. We specify the two frameworks under our setting in this section.

\subsection{Some notations}

Let $\Sigma_n \eqdef \{\bomega\in [0,1]^n : \sum_{i=1}^n \omega_i = 1\}$ with $n\in \NN$ denote the $n$-dimensional probability simplex. Let $\mathbf{1}$ denote the all-one vector whose dimension can be known from the context. We let $d(x,y)$ denote the Kullback-Leibler divergence from the distribution parameterized by $x$ to that parameterized by $y$ for $x,y\in [0,1]$. We let $d^+(x,y) = d(x,y)\II{x>y}$. For simplicity, we abuse $\argmin$ (resp. $\argmax$) to represent an arbitrary element that achieves the minimum (resp. maximum) and fix this element thereafter\footnote{It is not hard to check that the choice does not affect the results in this paper.}. We introduce several notions of loss for problem formulation. Note that all the following loss definitions depend on the bandit model $\bnu$. For simplicity, we omit it in the notations whenever there is no ambiguity.

For each $i\in[d]$, we define the $i$-th \emph{expected loss} as
\[
    \bloss{i} \eqdef (\loss{i}{1},\ldots,\loss{i}{K})\,,
\]
where $\loss{i}{k} = \mathbb{E}[\dist{i}{k}] \in [0,1]$ for $k\in[K]$. Similarly, we denote by 
\[
    \bell_k \eqdef (\loss{1}{k},\ldots,\loss{d}{k})
\]
the \emph{expected loss} vector of arm $k$. A bandit model in this paper can thus be interchangeably represented by $\bnu$ or $\bl = (\bell_1,\ldots,\bell_K)\in [0,1]^{d\times K}$.

Let $ \star_i\eqdef\argmin_{k\in [K]} \loss{i}{k}$  denote the index of the arm  with the lowest $i$-th expected loss. We further define the $i$-th \emph{expected relative loss} for $i\in[d]$ as
\[
    \blstar{(i)}{}=(\lstar{i}{1},\ldots,\lstar{i}{K})\,,
\]
and the \emph{expected relative loss} of arm $k$ as 
\[
    \blstar{}{k} \eqdef ( \lstar{1}{k},\ldots,\lstar{d}{k} )\,,
\]
where $\lstar{i}{k}\eqdef\loss{i}{k}- \loss{i}{\star_i}$. And we denote the matrix of the expected relative losses by
\[
    \blstar{}{}=(\blstar{}{1},\ldots,\blstar{}{K})\in [0,1]^{d\times K}\,.
\]
We define the $i$-th \emph{expected loss of weight $\bomega\in \Sigma_K$} as
\[
    \loss{i}{\bomega} \eqdef\bomega^\top \bloss{i}\,,
\]
and the $i$-th \emph{expected relative loss of the weight $\bomega$} as 
\[
    \lstar{i}{\bomega} \eqdef\bomega^\top \blstar{(i)}{}\,.
\]
Finally, we denote by 
\[
    \wlstar{\bomega} \eqdef \norm{(\lstar{1}{\bomega},\ldots,\lstar{d}{\bomega})}_\infty
\]
the \emph{$\ell^\infty$-norm of the expected relative loss of the weight $\bomega$}.


\subsection{Best-arm identification}\label{sec:formulation.bai}

We first detail the framework of fixed-confidence best-arm identification in our case: the objective is to identify the arm with the minimum relative loss in terms of infinite norm. That is, for each bandit model $\bl\in [0,1]^{d\times K}$, the \emph{unique} correct answer is given by 
\[
\istar(\bl) \eqdef  \argmin_{k\in[K]} \|\blstar{}{k}\|_\infty   = \argmin_{k\in[K]} \max_{i\in[d]} \lstar{i}{k}
\]
among the set of possible correct answers $\cI = [K]$.

\paragraph{Motivation.}
In general, the vector-loss/payoff settings considered by previous works mainly focus on the Pareto frontier of different sub-objectives. This notion of optimality is unreasonable in some cases, where some dimensional losses suffer extremely high scalar regrets. To avoid the risk of incredibly high scalar regrets for any single dimension, we target at minimizing the infinite norm of the relative losses (which are scalar regrets) and thus, we can bound the scalar regrets for all dimensions at the same time.

\paragraph{Algorithm.}
A deterministic pure-exploration algorithm under the fixed-confidence setting is given by three components: (1) a \emph{sampling rule} $(\cA_t)_{t\geq 1}$, where $\cA_t\in[K]$ is $\cF_{t-1}$-measurable. (2) a \emph{stopping rule} $\tau_\delta$, a stopping time for the filtration $(\cF_t)_{t\geq 1}$, and (3) a \emph{decision rule} $\hi\in \cI$ which is $\cF_{\tau_\delta}$-measurable.
Non-deterministic algorithms could also be considered by allowing the rules to depend on additional internal randomization. The algorithms we present are deterministic.

\paragraph{$\delta$-correctness and fixed-confidence objective.}
An algorithm is $\delta$-correct if it predicts the correct answer with probability at least $1-\delta$, precisely if $\mathbb{P}_\bl \big(\hi \neq \istar(\bl)\big) \leq \delta$ and $\tau_\delta < +\infty$ almost surely for all $\bl \in [0,1]^{d\times K}$. The goal is to find a $\delta$-correct algorithm that minimizes the \emph{sample complexity}, that is, the expected number of samples $\E_\bl[\tau_\delta]$ needed to predict an answer.

\subsection{Regret minimization}\label{sec:formulation.regret}

We now detail the setting for regret minimization. Let $\bLoss{\cA} \eqdef \sum_{t=1}^T \boldsymbol{\ell}_{\cA_t}$ denote the expected cumulative loss of algorithm $\cA$ where $\bell_{\cA_t}=(\loss{1}{\cA_t},\ldots,\loss{d}{\cA_t})$ and $L_{\cA}^{(i)}\eqdef\sum_{t=1}^T \loss{i}{\cA_t}$ be the expected cumulative losses. The traditional regret (which we call relative loss) w.r.t. the scalar loss $\loss{i}{\star_i}$ is defined as
\begin{align*}
    \Lstar{i}{\cA} \triangleq \Loss{i}{\cA}-T\loss{i}{\star_i}
\end{align*}
for $i\in [d]$. The goal is to minimize the $\ell^\infty$-norm of cumulative relative loss, which differs from the goal of classical stochastic multi-armed bandits with scalar payoffs. However, comparing the $\ell^\infty$-norm of cumulative relative loss of an algorithm with a single optimal arm may be unreasonable. For example, a bandit problem with three arms $(1,0)$, $(0,1)$ and $(3/4,3/4)$ has the optimal arm $(3/4,3/4)$. But we can achieve $\ell^\infty$-norm of cumulative relative loss $T/2$ by pulling arm $1$ and arm $2$ for $T/2$ rounds respectively while always pulling the single optimal arm can only achieve $3T/4$. Therefore, it is more reasonable to look into the optimal proportion of arm pulls instead of only considering the single optimal arm under the context of vector losses. We call the optimal proportion of arm pulls the \emph{optimal weight}.



\begin{definition}[optimal weight]
We define
\begin{align*}
    \bomega^\star &\triangleq \argmin_{\bomega\in\Sigma_K}\wlstar{\bomega}
\end{align*}
the optimal weight of arms.
\end{definition}

Consequently, it is also natural to measure the performance by comparing with the optimal weight. Therefore, we introduce the following regret in terms of the relative losses defined w.r.t. the optimal weight.

\begin{definition}[regret]
The expected regret of algorithm $\cA$ is defined as
\begin{align}
\b{E} [R_\cA(T)] \triangleq \b{E} \left[\norm{\bLoss{\cA|\star}}_\infty\right]-\wlstar{\bomega^\star}T\,,
\end{align}
where $\bLoss{\cA|\star}=(\Lstar{1}{\cA},\ldots,\Lstar{d}{\cA})$.
\end{definition}

\section{Best-arm identification}\label{sec:bai}

We first study best-arm identification for our setting in a fixed-confidence context. We are thus interested in the sample complexity. We begin with particularizing the general problem-dependent lower bound by~\cite{garivier2016tracknstop} to our setting. Then we discuss how to design asymptotically optimal algorithms that we precise the definition.

\subsection{Lower bound on the sample complexity}

We first derive a problem-dependent lower bound as stated in the following theorem.

\begin{theorem}\label{thm:lb_bai}
Let $S_y(\eta)\eqdef \{i|\eta_i\leq y_i\}$ and $C_\gamma(z)\eqdef  \{i|z_i\leq \gamma\}$ for $\eta,y\in \R^d$, $z\in \R^K$ and $\gamma\in \R$. For any $\delta$-correct strategy and any $\bl\in [0,1]^{d\times K}$, we have
\begin{equation*}
  \label{eq:lb_general}
  \liminf_{\delta\to 0}\frac{\E_\bl[\tau_{\delta}]}{\log(1/\delta)} \geq \Tstar(\bl) \,,
\end{equation*}
where $\Tstar(\bl)$ is a characteristic time defined by
\begin{align}
    &\Tstar(\bl)^{-1}\nonumber\\
    =& \max_{\bomega \in \Sigma_K} \min_{\substack{k^\star\in [K]\\j\in[d]}}\inf_{\substack{x\in [0,1],y\in [0,1]^d:\\ y \leq (1-x)\mathbf{1}}} \omega_{\istar(\bl)}d(\loss{j}{\istar(\bl)}, x+y_j) \nonumber\\
    &+ \omega_{k^\star}\left(\sum_{i\notin S_y(\bl_{k^\star})}d^+(\loss{i}{k^\star}, x+y_i)\right)\nonumber\\ 
    &+ \sum_{i\neq j}\sum_{k\in C_{y_i}(\bloss{i})}\omega_i d(\loss{i}{k},y_i)\nonumber\\
    &+ \sum_{k\in C_{y_j}(\bloss{j})/\{\istar(\bl)\}}\omega_j d(\loss{j}{k},y_j)
    \,.\label{eq:chartime}
\end{align}

\end{theorem}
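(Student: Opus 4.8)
The plan is to derive the bound by specializing the generic sample-complexity lower bound of \cite{garivier2016tracknstop} and then evaluating the resulting $\max$--$\min$ optimization problem in closed form.

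\textbf{Step 1 (the generic bound).} The correct answer $\istar(\bl)$ is a single well-defined arm and the strategy is $\delta$-correct, so the change-of-measure argument of \cite{garivier2016tracknstop} (the transportation lemma together with the standard entropy inequality) applies directly and gives $\liminf_{\delta\to0}\E_\bl[\tau_\delta]/\log(1/\delta)\ge\Tstar(\bl)$ with
\[
  \Tstar(\bl)^{-1}=\sup_{\bomega\in\Sigma_K}\ \inf_{\blambda\in\Alt(\bl)}\ \sum_{k=1}^K\omega_k\,\KL(\bnu_k,\blambda_k),
\]
where $\Alt(\bl)=\{\blambda:\istar(\blambda)\neq\istar(\bl)\}$, and, since the $d$ sub-losses of an arm are independent members of one-parameter exponential families, $\KL(\bnu_k,\blambda_k)=\sum_{i=1}^d d(\loss{i}{k},\lambda^{(i)}_k)$. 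The theorem therefore reduces to showing that for every fixed $\bomega$ the inner infimum equals the bracketed quantity in \eqref{eq:chartime} minimized over $(k^\star,j,x,y)$; the outer $\sup_{\bomega}$ is then common to both sides.

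\textbf{Step 2 (splitting $\Alt(\bl)$).} Write $\lambda^{(i)}_{k|\star}\eqdef\lambda^{(i)}_k-\min_{k'}\lambda^{(i)}_{k'}$ for the relative losses under $\blambda$. Since $\istar(\bl)$ is the \emph{unique} minimizer of $k\mapsto\max_i\lstar{i}{k}$, passing (as is standard for these lower bounds) to the closure of $\Alt(\bl)$, one has $\blambda\in\overline{\Alt(\bl)}$ iff there is an arm $k^\star\neq\istar(\bl)$ with $\max_i\lambda^{(i)}_{k^\star|\star}\le\max_i\lambda^{(i)}_{\istar(\bl)|\star}$, the latter maximum being attained at some coordinate $j\in[d]$. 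Hence
\[
  \inf_{\blambda\in\overline{\Alt(\bl)}}\sum_k\omega_k\KL(\bnu_k,\blambda_k)=\min_{\substack{k^\star\in[K]\\ j\in[d]}}\ \inf_{\blambda:\ \lambda^{(i)}_{k^\star|\star}\le\lambda^{(j)}_{\istar(\bl)|\star}\ \forall i}\ \sum_k\omega_k\sum_{i=1}^d d(\loss{i}{k},\lambda^{(i)}_k),
\]
where including $k^\star=\istar(\bl)$ in the minimum changes nothing, as that term dominates.

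\textbf{Step 3 (reparametrizing and the optimal alternative).} Fix $(\bomega,k^\star,j)$. The infimum in the display is attained by a $\blambda$ that perturbs $\bl$ only in a structured way, which we encode with two auxiliary variables: the new relative loss $x\in[0,1]$ of $\istar(\bl)$ at coordinate $j$ (so $\lambda^{(j)}_{\istar(\bl)}=x+y_j$) and the new per-coordinate floor $y=(y_1,\dots,y_d)$, with admissibility (from $\blambda\in[0,1]^{d\times K}$) amounting to $0\le y\le(1-x)\mathbf1$. For given $(x,y)$ the cheapest compatible $\blambda$ is built coordinate by coordinate: it moves $\loss{j}{\istar(\bl)}$ to $x+y_j$, at cost $\omega_{\istar(\bl)}d(\loss{j}{\istar(\bl)},x+y_j)$ (the first term of \eqref{eq:chartime}); it lifts to the floor $y_i$ every arm lying below it in some coordinate $i$ --- exactly the arms in $C_{y_i}(\bloss{i})$ --- with the pair $(\istar(\bl),j)$ omitted because it is handled above (the third and fourth terms); it pushes $k^\star$ down to $x+y_i$ in the coordinates $i\notin S_y(\bl_{k^\star})$ where it exceeds that value, whence $\omega_{k^\star}\sum_{i\notin S_y(\bl_{k^\star})}d^+(\loss{i}{k^\star},x+y_i)$ (the second term); and it leaves every other coordinate untouched. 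One then checks that this $\blambda$ indeed lies in $\overline{\Alt(\bl)}$ --- because $\max_i\lambda^{(i)}_{k^\star|\star}\le x\le\max_i\lambda^{(i)}_{\istar(\bl)|\star}$ --- that it is cost-optimal for its $(x,y)$, and conversely that every $\blambda$ in the inner set is realized by some admissible $(x,y)$. Taking $\inf_{x,y}$, then $\min_{k^\star,j}$, then $\sup_{\bomega}$, reproduces \eqref{eq:chartime}.

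\textbf{Where the work is.} Step 1 is a citation and Step 2 is elementary, so the content is Step 3. The obstacle is that $\Alt(\bl)$ is genuinely non-convex: $\blambda\mapsto\lambda^{(i)}_{k|\star}$ is not convex because the subtracted per-coordinate minimum itself varies with $\blambda$, so $\Alt(\bl)$ is a finite union of regions indexed by which arm achieves the minimum in each coordinate and which coordinate/arm witnesses the $\ell^\infty$ comparison. The variables $(k^\star,j,x,y)$ are precisely the device that linearizes each region; the care-intensive points are (a) that the infimum over $\Alt(\bl)$ equals the infimum over its closure, which is exactly the set used in Step 2 (a ties-perturbation argument); (b) that the box constraint collapses to $y\le(1-x)\mathbf1$ and the coordinatewise assignment above is globally cost-minimal for each $(x,y)$; and (c) the degenerate configurations in which some $C_{y_i}(\bloss{i})$ is empty, so that $y_i$ is not actually attained, which are disposed of by a limiting argument. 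None of these is deep, but together they form the bulk of the proof.
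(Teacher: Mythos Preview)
Your proposal is correct and follows essentially the same route as the paper: cite the Garivier--Kaufmann transportation bound to reduce to the $\max_{\bomega}\inf_{\blambda\in\Alt(\bl)}$ problem, split the alternative set by the identity $k^\star$ of the new optimal arm, and then reparametrize each piece by the per-coordinate floors $y=(\min_k\lambda^{(i)}_k)_i$ together with a scalar threshold $x$ and a witness coordinate $j$. The only cosmetic difference is that the paper defines $x$ as $\max_i(\lambda^{(i)}_{k^\star}-y_i)$ whereas you set $x=\lambda^{(j)}_{\istar(\bl)}-y_j$; at the infimum these coincide, and your write-up is in fact more explicit than the paper's about the closure argument and the verification that the coordinatewise construction is globally cost-optimal.
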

\begin{proof}
By Theorem~1 of~\cite{garivier2016tracknstop}, we have 
\begin{equation*}
  \liminf_{\delta\to 0}\frac{\E_\bl[\tau_{\delta}]}{\log(1/\delta)} \geq \Tstar(\bl) \,,
\end{equation*}
where 
\[\Tstar(\bl)^{-1} = \max_{\bomega \in \Sigma_K}\inf_{\bm\lambda \in\Alt(\bl)} \left( \sum_{k\in [K]}\sum_{i\in [d]} \omega_k d(\loss{i}{k},\Alti{i}{k})\right)\;,
\]
where $\Alt(\bl)$ is an alternative bandit problem with different optimal arm, i.e.,
\[\Alt(\bl) \eqdef \{ \bm\lambda \in [0,1]^{d\times K}: \istar(\bm\lambda) \neq \istar(\bl)\}\;.\]
Then we just need to calculate $\Tstar(\bl)^{-1}$ to complete the proof.
For an alternative $\bm\lambda$ with $\istar(\bm\lambda) = k^\star$, we let $y=(\Alti{1}{\star_1},\ldots,\Alti{d}{\star_d})\in [0,1]^d$ and $x=\max_i{\Alti{i}{k^\star} - y_i}\in [0,1]$. Then we have $\exists j\in[d], \Alti{j}{\istar(\bl)}-y_j\geq x$. For any $\bomega\in\Sigma_K$, we have
\begingroup
\allowdisplaybreaks
\begin{align*}
    &\inf_{\bm\lambda \in\Alt(\bl)} \left( \sum_{k\in [K]}\sum_{i\in [d]} \omega_k d(\loss{i}{k},\Alti{i}{k})\right)\\
    =&\min_{k^\star\in[K]} \inf_{\substack{\blambda \in\Alt(\bl):\\ \istar(\bm\lambda) = k^\star}} \left( \sum_{k\in [K]}\sum_{i\in [d]} \omega_k d(\loss{i}{k},\Alti{i}{k})\right)\\
    =& \min_{k^\star\in[K]} \inf_{\substack{\blambda \in\Alt(\bl):\\ \istar(\bm\lambda) = k^\star}} \left(\sum_{\substack{k\in \cup_{i\in[d]} C_{y_i}(\bloss{i})\\ \cup \{\istar(\bl),k^\star\}}} \sum_{i\in [d]} \omega_k d(\loss{i}{k},\Alti{i}{k})\right)\\
    =& \min_{\substack{k^\star\in [K]\\j\in[d]}}\inf_{\substack{x\in [0,1],y\in [0,1]^d:\\ y \leq (1-x)\mathbf{1}}} \omega_{\istar(\bl)}d(\loss{j}{\istar(\bl)}, x+y_j) \\
    &+ \omega_{k^\star}\left(\sum_{i\notin S_y(\wloss{k^\star})}d^+(\loss{i}{k^\star}, x+y_i)\right)\\ 
    &+ \sum_{i\neq j}\sum_{k\in C_{y_i}(\bloss{i})}\omega_i d(\loss{i}{k},y_i)\\
    &+ \sum_{k\in C_{y_j}(\bloss{j})/\{\istar(\bl)\}}\omega_j d(\loss{j}{k},y_j)
    \,,
\end{align*}
\endgroup
which completes the proof.
\end{proof}

\subsection{Asymptotically optimal algorithms}

To design an algorithm for fixed-confidence best-arm identification, one needs to specify three components as previously mentioned: a stopping rule, a decision rule and a sampling rule. The \emph{Track-and-Stop} strategy proposed by~\cite{garivier2016tracknstop} can be adopted in our setting with optimal sample complexity. For completeness, we describe the algorithm briefly below.

In the next, we use the empirical average $\hat{\bl}_t$ to estimate the expected losses $\bl$ at time $t$, that is 
\[
    \forall k\in[K], i\in[d], \hloss{i}{t,k} \eqdef \frac{1}{t}\sum_{\tau =1}^{t}\y{i}{\tau}{k}\,.
\]

\paragraph{Decision rule.}

Let $f(\cdot)$ be a function of time-dependent exploration bonus (e.g. $\log(t)$) for $t\in\NN$. Let $[c_{t,k},d_{t,k}]\eqdef\{\xi:\sum_{i\in [d]} N_{t-1,k} d(\hloss{i}{t-1,k},\xi)\leq f(t)\}$. Now, let
{\small
\[
    \tilde{\bl}_{t-1}\eqdef\argmin_{\substack{\blambda \in [0,1]^{d\times K}\cap \\ \prod_{k=1}^K [c_{t,k},d_{t,k}]^d}} \left( \sum_{k\in [K]}\sum_{j\in [d]} N_{t-1,k} d(\hloss{j}{t-1,k},\Alti{j}{k})\right)\,.
\]
}
Then for the decision rule, we choose to recommend $\hi = \istar(\tilde{\bl}_{t-1})$.  
Note that if the empirical loss matrix $\hat{\bl}_{t-1}\in[0,1]^{d\times K}$, then $\tilde{\bl}_{t-1}$ coincides with $\hat{\bl}_{t-1}$ and the decision is simply the empirical best arm.

\paragraph{Stopping rule.}

In this paper, we choose to use the classical Chernoff stopping rule (see e.g.~\citealt{chernoff1959,garivier2016tracknstop}) that can be concretized (for exponential family bandit models) to the following form:
\begin{equation*}\label{eq:chernoffstoppingtime}
\tau_\delta\eqdef \inf \left\lbrace t \in \mathbb{N} : \operatorname{GLR}_t(\Alt(\hat{\bl}_t)) > \beta(t,\delta) \right\rbrace\,,
\end{equation*}
where $\beta(t,\delta)$ is a threshold function to be chosen carefully and
\[
    \operatorname{GLR}_t(\Alt(\hat{\bl}_t)) = \inf_{\blambda \in\Alt(\hat{\bl}_t)} \left( \sum_{k\in [K]}\sum_{i\in [d]} N_{t,k} d(\hloss{i}{t,k},\Alti{i}{k})\right)
\]
is the \emph{generalized log-likelihood ratio} between the alternative set $\Alt(\hat{\bl}_t)$ and the whole parameter space.

Using the same reasoning as~\cite{shang2020t3c}, one can show that the Chernoff stopping rule coupled with the threshold
\begin{equation*}
\beta(t,\delta) \eqdef 4\log(4+\log(t)) + 2 \cC\left(\frac{\log((Kd-1)/\delta)}{2}\right)
\end{equation*} leads to the $\delta$-correctness, i.e. $\PP{\tau_{\delta} < \infty \wedge \hi \neq \istar(\bl)} \leq \delta$ for any sampling rule. The function $\cC$ is given by~\cite{kaufmann2018mixture} that satisfies $\cC(x) \simeq x+\log(x)$. Note that in practice, one can simply choose to set $\beta(t,\delta)=\log((1+\log(t))/\delta)$.

\paragraph{Sampling rule and the whole picture.}

We aim to design algorithms that match the lower bound derived in Theorem~\ref{thm:lb_bai}. We call such algorithms asymptotically optimal. Formally, a fixed-confidence algorithm is asymptotically optimal if
\[
\limsup_{\delta \rightarrow 0}\frac{\E_\bl[\tau_{\delta}]}{\log(1/\delta)} \leq \Tstar(\bl)\,.
\]

To achieve this property, the learner needs to allocate her pulls according to the optimal weight vector given by the characteristic time~\citep{garivier2016tracknstop,russo2016ttts}, that is
\begingroup
\allowdisplaybreaks
\begin{align}
    &\bomega^\star(\bl)=\argmax_{\bomega \in \Sigma_K}\inf_{\bm\lambda \in\Alt(\bl)} \left( \sum_{k\in [K]}\sum_{i\in [d]} \omega_k d(\loss{i}{k},\Alti{i}{k})\right)\nonumber\\
    =& \argmax_{\bomega \in \Sigma_K} \min_{\substack{k^\star\in [K]\\j\in[d]}}\inf_{\substack{x\in [0,1],y\in [0,1]^d:\\ y \leq (1-x)\mathbf{1}}} \omega_{\istar(\bl)}d(\loss{j}{\istar(\bl)}, x+y_j) \nonumber\\
    &+ \omega_{k^\star}\left(\sum_{i\notin S_y(\bl_{k^\star})}d^+(\loss{i}{k^\star}, x+y_i)\right)\nonumber\\ 
    &+ \sum_{i\neq j}\sum_{k\in C_{y_i}(\bloss{i})}\omega_i d(\loss{i}{k},y_i)\nonumber\\
    &+ \sum_{k\in C_{y_j}(\bloss{j})/\{\istar(\bl)\}}\omega_j d(\loss{j}{k},y_j)\,,\label{eq:computation}
\end{align}
\endgroup
which can be considered as solving a minimax saddle-point problem. Although the $\inf$ part is non-convex, it is computable by calculating the infimum over $x,y$ for each $k^\star$ and $j$. To calculate the infimum over $x,y$, for each $i\in [d]$, we consider the case that $y_i$ is larger than the $i$-th losses of $m_i$ arms with $m_i = 0,1,\ldots,K$ separately. In each case of fixed $k^*,j,\{m_i\}_{i\in[d]}$, the infimum part of~\eqref{eq:computation} is convex and solvable. However, this incurs a computational complexity of $\Theta(dK^{d+1})$.

The aforementioned problem requires the knowledge of the true means, one simple way to overcome this is to adopt the \DT rule~\citep{garivier2016tracknstop}, where we choose to sample
\[
    \cA_{t+1} \in \argmax_{k\in[K]} \omega^\star_{k}(\hat{\bl}_t)-N_{t,k}/t\,
\]
using `plug-in' estimates of the optimal weight. \DT is proved to be asymptotically optimal~\citep{garivier2016tracknstop}, with a known drawback as its computational liability due to the optimization problem~\eqref{eq:computation} that has to be treated once at each step, since there is no known closed form expression or even no computationally feasible approximation approach in general. 

An improved algorithm without solving the optimization problem every round by solving a two-player game derived from~\cite{degenne2019game} is given in Appendix~\ref{app:bai}.

\section{Regret minimization}\label{sec:lb}

We turn our attention to regret minimization. We first derive a worst-case lower bound. Then we present an efficient algorithm that matches the lower bound. For simplicity, we omit the $t$ in the subscripts in this section, e.g., we denote $\hloss{i}{k}$ instead of $\hloss{i}{t,k}$.

\subsection{Worst-case lower bound}

\begin{restatable}{theorem}{restatelb}\label{thm:lb}
    For $T>27$, let $\sup$ be the supremum over all distributions of losses and $\inf$ be the infimum over all algorithms. Then we have,
    \begin{align*}
    \inf_{\cA}\sup_{\bnu}\mathbb{E}\left[ R_{\cA}(T)\right] \geq \frac{1}{2304} T^{\frac{2}{3}}\,.
    \end{align*}
\end{restatable}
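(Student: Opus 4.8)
The plan is to construct a hard family of instances indexed by a hidden parameter and to use a change-of-measure (information-theoretic) argument to show that no algorithm can do well on all of them simultaneously. Concretely, I would work with $K=2$ arms and $d=2$ loss dimensions, mimicking the $(1,0)$ / $(0,1)$ structure from the motivating example in Section~\ref{sec:formulation.regret}: take arm $1$ to have expected loss vector $\bell_1 = (\tfrac12 - \eps, \tfrac12 + \eps')$ and arm $2$ to have $\bell_2 = (\tfrac12 + \eps', \tfrac12 - \eps)$ for small perturbations, so that the $\star_i$ (per-coordinate best arm) and the relative losses depend on the signs of the perturbations. The baseline $\wlstar{\bomega^\star}$ in this symmetric design is achieved near $\bomega^\star = (\tfrac12,\tfrac12)$, and pulling the two arms unevenly forces the $\ell^\infty$-norm of cumulative relative loss up. The key qualitative feature I want to exploit: because the objective is the $\max$ over coordinates of the cumulative relative losses, an algorithm that commits to one arm pays linearly in $T$ in the coordinate where that arm is bad, so the optimal behaviour is balanced play — but a tiny asymmetry $\eps$ makes one particular balanced-but-slightly-tilted allocation strictly better, and the algorithm cannot learn the sign of $\eps$ without $\Omega(1/\eps^2)$ samples.

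The core of the argument is the standard two-point (or Assouad-type) lower bound: pick two instances $\bnu^+$ and $\bnu^-$ that are statistically close — with per-round KL divergence $O(\eps^2)$ — but whose optimal allocations differ by $\Theta(\eps)$ in the proportion assigned to each arm, so that the regret-minimizing strategies are incompatible. If an algorithm achieves regret $o(T^{2/3})$ on $\bnu^+$, then on $\bnu^+$ it must allocate its pulls in a way that is $\Theta(\eps)$-close to the $\bnu^+$-optimal proportion for all but a vanishing fraction of rounds; pulling arm $2$ too much on $\bnu^+$ is already suboptimal by $\Theta(\eps)$ per round in the bad coordinate. By a change of measure (Bretagnolle–Huber / Pinsker), the distribution of the arm-pull counts under $\bnu^-$ is close to that under $\bnu^+$ provided $T\cdot O(\eps^2) = O(1)$, i.e. $\eps \asymp T^{-1/2}$; but then the accumulated cost of the "wrong" allocation on the other instance is $\Theta(\eps T)$ on a constant fraction of rounds... which is only $\Theta(T^{1/2})$, not $T^{2/3}$. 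So a plain two-point bound at scale $\eps \asymp T^{-1/2}$ gives only $T^{1/2}$ — the extra factor $T^{1/6}$ must come from a better balancing of the gap $\eps$ against the exploration cost. The right move is to notice that exploration itself is costly here: to even estimate the sign of $\eps$ the learner must pull arms, and each pull of the "currently-bad" arm already costs relative loss of order the imbalance; optimizing $\eps T$ (cost of committing wrong) against $1/\eps^2$ rounds of forced exploration each costing $\Theta(\eps)$ (so exploration cost $\Theta(1/\eps)$) does not obviously give $T^{2/3}$ either, so I would instead calibrate $\eps \asymp T^{-1/3}$ and argue: on the good instance, regret is at least (mass of rounds where allocation is off by $\Omega(\eps)$)$\times \eps$; on the bad instance, regret is at least $\Theta(\eps\cdot T)$ times the probability that the allocation is still tilted the wrong way; and by change of measure that probability is bounded below unless the learner has taken $\Omega(1/\eps^2) = \Omega(T^{2/3})$ informative samples, each of which on one of the two instances costs $\Omega(\eps)$, i.e. $\Omega(T^{1/3})\times$... — the bookkeeping that closes this to $\tfrac{1}{2304}T^{2/3}$ is the delicate part.

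I would organize the write-up as: (i) define the two (or four) instances explicitly with $\eps = c\,T^{-1/3}$ and verify $\bomega^\star$, $\wlstar{\bomega^\star}$, and the per-coordinate relative losses on each; (ii) lower-bound $R_\cA(T)$ on each instance by a simple expression in $N_{T,1}$ (the count of pulls of arm $1$), using $\norm{\bLoss{\cA|\star}}_\infty \ge$ (its second coordinate) $\ge$ affine function of $N_{T,1}$, and similarly the first coordinate; (iii) apply the change-of-measure inequality $\E_{\bnu^+}[N_{T,1}] - \E_{\bnu^-}[N_{T,1}] \le T\sqrt{\tfrac12 \KL(\bnu^+\Vert\bnu^-)_{\le T}}$ with $\KL \le T\cdot O(\eps^2) = O(T^{1/3})$, wait — that bound is vacuous, so instead I use the Bretagnolle–Huber form to lower-bound $\PP{N_{T,1} \le T/2}$ on the instance where arm $1$ is favoured, by $\tfrac14\exp(-\KL)$, and keep $\KL = O(1)$ by instead measuring divergence only on the informative samples and invoking the contraction / chain rule to write $\KL(\text{observed}) \le \E[N_{T,\cdot}]\cdot O(\eps^2)$; balancing $\E[N]\eps^2 \asymp 1$ with the regret floor $\eps\cdot(T-\E[N])$ then yields the $T^{2/3}$ scaling after setting $\eps \asymp T^{-1/3}$. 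The main obstacle, and where I expect to spend the most care, is this balancing step: getting a genuine $T^{2/3}$ rather than $\sqrt T$ requires that the \emph{exploration} of the minority arm is itself charged into the regret (because every pull of the wrong arm is $\Omega(\eps)$-suboptimal in the max-coordinate), so the instances must be designed so that there is no "free" cheap way to identify the favoured arm — this is exactly the mechanism that separates the $\ell^\infty$-of-relative-losses objective from ordinary bandit regret, and making it airtight (including pinning the explicit constant $1/2304$, which presumably comes from a concrete choice like $c$ and the $T>27$ threshold) is the delicate part of the proof.
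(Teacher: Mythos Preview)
Your proposal has a genuine gap: the two-arm construction you sketch cannot produce a $T^{2/3}$ lower bound, only $\sqrt{T}$. You correctly diagnose this yourself (``a plain two-point bound at scale $\eps\asymp T^{-1/2}$ gives only $T^{1/2}$''), but your attempted repair --- that exploration is itself charged to the regret --- does not close the gap. With two arms that both sit near the optimal mixture, every exploratory pull costs only $O(\eps)$ in the $\ell^\infty$ objective, so $\Theta(1/\eps^{2})$ informative pulls cost $\Theta(1/\eps)=\Theta(T^{1/3})$, not $T^{2/3}$. More fundamentally, with $K=2$ \emph{every} pull is informative, so the total KL after $T$ rounds is $\Theta(T\eps^{2})$ regardless of the algorithm; keeping it $O(1)$ forces $\eps\asymp T^{-1/2}$ and the commitment cost $\eps T$ collapses to $\sqrt{T}$. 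There is no way to make information expensive in a two-arm instance.

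The paper's construction uses $K=4$ and $d=2$, and the extra arms are the whole point. It takes two \emph{extreme} arms $\bell_1=(1/4,3/4)$, $\bell_2=(3/4,1/4)$ that are $\Omega(1)$-suboptimal, and two \emph{middle} arms $\bell_3,\bell_4\approx(3/8,3/8)\pm\eps/8$ that are near-optimal. The alternative instance $\bnu'$ differs from $\bnu$ \emph{only in arm~1} (one coordinate shifted by $\eps/4$), and under $\bnu'$ the optimal weight is $(0,0,1,0)$. By symmetry of $\bnu$ one may assume $\E_{\bnu}[N_3]\le T/2$. The regret under $\bnu'$ is then lower-bounded by a quantity of the form $c_1\,\E[N_1]+c_2\,\eps\,(T-\E[N_3])$: each pull of the extreme arm~1 costs a \emph{constant}, not $O(\eps)$. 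The change of measure yields $\KL=O(\eps^{2}\,\E_{\bnu'}[N_1])$, which bounds $|\E_{\bnu'}[N_3]-\E_{\bnu}[N_3]|$ via Pinsker. With $\eps\asymp T^{-1/3}$ one obtains: either $\E[N_1]\gtrsim T^{2/3}$ and the $c_1 N_1$ term alone gives $\Omega(T^{2/3})$; or $\E[N_1]$ is small, the two instances are indistinguishable, $\E_{\bnu'}[N_3]\lesssim T/2$, and the $\eps(T-N_3)$ term gives $\Omega(\eps T)=\Omega(T^{2/3})$. The constant $1/2304$ and the threshold $T>27$ fall out of the explicit choice $\eps=\tfrac12 T^{-1/3}$ and the case split $\E[N_1]\gtrless T^{2/3}/16$.

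The structural idea you are missing is therefore to \emph{decouple} the informative arm from the near-optimal arms: the only arm whose distribution changes between the two instances must be one that is $\Omega(1)$-costly to pull. That is what forces the exploration--commitment balance to land at $T^{2/3}$, and it requires at least three arms.
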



\begin{proof}
    Let $\epsilon \in [0,1/6]$ be a constant, we consider a bandit model $\bnu$ with the following 2-dimensional loss vectors:
    \begin{gather*}
    \bell_1 = \left(\frac{1}{4},\frac{3}{4}\right), \bell_2 = \left(\frac{3}{4},\frac{1}{4}\right),\\
    \bell_3 = \left( \frac{3-\epsilon}{8}, \frac{3+\epsilon}{8}\right), \bell_4 = \left(  \frac{3+\epsilon}{8},  \frac{3-\epsilon}{8}\right)\,,
    \end{gather*}
    where the losses are Gaussian distributions with variance 1 and expectation of the indicated value. 

    Denote $N_1,N_2,N_3$ and $N_4$ the number each arm is pulled. Since there is a symmetry between arm 3 and arm 4. Without loss of generality, we assume for the given algorithm $\cA$ under consideration, that $\mathbb{E}_{\cA,\bnu}\left[N_3\right]\leq \mathbb{E}_{\cA,\bnu}\left[N_4\right]$.
    Then according to the assumption between $N_3$ and $N_4$, we consider an alternative bandit model $\bnu'$ with the following losses,
    \begin{gather*}
        \bell'_1 = \left(\frac{1-\epsilon}{4},\frac{3}{4}\right), \bell'_2 = \left(\frac{3}{4},\frac{1}{4}\right),\\
        \bell'_3 = \left( \frac{3-\epsilon}{8}, \frac{3+\epsilon}{8}\right), \bell'_4 = \left(  \frac{3+\epsilon}{8},  \frac{3-\epsilon}{8}\right)\,.
    \end{gather*}
    ~\\
    $\text{For } \epsilon < 1/6:$  The optimal arms for each loss are $\star_1 = 1$, $\star_2 = 2$.
    \begin{align*}
        \omega^\star_{\bnu'} 
        &\eqdef \argmin_{\bomega\in\Sigma_K} \max_{i\in[d]} \left\{\bomega^\top (\bell')^{(i)} -\loss{i}{\star_i,\bnu'}\right\} \\
        &= \argmin_{\bomega\in\Sigma_K} \max \biggl\{ \frac{(2+\epsilon)\omega_2}{4} + \frac{(1+\epsilon)\omega_3}{8}\\
        &\quad+ \frac{(1+3\epsilon)\omega_4}{8}, \frac{\omega_1}{2} + \frac{(1+\epsilon)\omega_3}{8} + \frac{(1-\epsilon)\omega_4}{8} \biggr\}\\
        &=(0,0,1,0).
    \end{align*}
    Thus the regret is lower bounded as follows,
    \begingroup
    \allowdisplaybreaks
    \begin{align*}
        R_{\bnu'}(T) 
        &\triangleq \max_{i\in [d]}\left(\Loss{i}{}- \loss{i}{\bomega_{\bnu'}^\star}T\right) \\
        &= \max \biggl\{  \frac{(1-\epsilon)N_1}{4} + \frac{3N_2}{4} + \frac{(3-\epsilon)N_3}{8} \\
        &\quad + \frac{(3+\epsilon)N_4}{8} - \frac{3-\epsilon}{8}T, \frac{3N_1}{4} + \frac{N_2}{4} \\
        &\quad + \frac{(3+\epsilon)N_3}{8} + \frac{(3-\epsilon)N_4}{8} - \frac{3+\epsilon}{8}T \biggr\}\\
        &= \max  \biggl\{  -\frac{(1+\epsilon)N_1}{8} + \frac{(3+\epsilon)N_2}{8}  + \frac{\epsilon N_4}{4},\\
        &\qquad\qquad \frac{(3-\epsilon)N_1}{8} - \frac{(1+\epsilon)N_2}{8}  - \frac{\epsilon N_4}{4} \biggr\}\\
        &\geq \frac{2}{3} \left(   -\frac{(1+\epsilon)N_1}{8} + \frac{(3+\epsilon)N_2}{8}  + \frac{\epsilon N_4}{4} \right) \\
        &+ \frac{1}{3} \left(  \frac{(3-\epsilon)N_1}{8} - \frac{(1+\epsilon)N_2}{8}  - \frac{\epsilon N_4}{4}  \right)\\
        &\geq \frac{1}{48}N_1 + \frac{5}{24}N_2 + \frac{\epsilon}{12} N_4 \\
        &= (\frac{1}{48} - \frac{\epsilon}{12}  )N_1 + (\frac{5}{24} - \frac{\epsilon}{12} )N_2 +  \frac{\epsilon}{12}(T - N_3) \\
        &\geq \frac{1}{144}N_1 + \frac{1}{6}N_2 +   \frac{\epsilon}{12}(T - N_3)\,.
    \end{align*}
    \endgroup
    ~\\
    So we have the following regret for the bandit model $\bnu'$,
    \begin{align}
        \mathbb{E}\left[R_{\cA,\bnu'}(T)\right] 
        &\geq  \frac{1}{144}\mathbb{E}_{\cA,\bnu'}\left[N_1\right] + \frac{\epsilon}{12} \left(T - \mathbb{E}_{\cA,\bnu'}\left[N_3\right]\right)\,.\label{eq:lbalter}
    \end{align}
    According to the inequality (6) by~\cite{garivier2018explore}, we have,
    \begin{align*}
    \frac{\epsilon^2}{32} \mathbb{E}_{\cA,\bnu'}\left[N_1\right] &\geq \text{kl} \left(\frac{ \mathbb{E}_{\cA,\bnu'}\left[N_3\right]}{T},\frac{ \mathbb{E}_{\cA,\bnu}\left[N_3\right]}{T}  \right) \\
    &\geq \frac{1}{2} \left( \frac{ \mathbb{E}_{\cA,\bnu}\left[N_3\right]}{T} - \frac{ \mathbb{E}_{\cA,\bnu'}\left[N_3\right]}{T} \right)^2\,.
    \end{align*}
    Therefore,
    \begin{align*}
        \mathbb{E}_{\cA,\bnu'}\left[N_3\right] \leq \frac{\epsilon}{4} T \s{\mathbb{E}_{\cA,\bnu'}\left[N_1\right]} + \mathbb{E}_{\cA,\bnu}\left[N_3\right]
    \end{align*}
    Furthermore with $\mathbb{E}_{\cA,\bnu}\left[N_3\right] \leq T/2$, and according to our assumption,
    \begin{align*}
    \mathbb{E}\left[R_{\cA,\bnu'}(T)\right]
    &\geq\frac{1}{144}\mathbb{E}_{\cA,\bnu'}\left[N_1\right] + \frac{\epsilon}{12} \left(T - \mathbb{E}_{\cA,\bnu'}\left[N_3\right]\right)\\
    &\geq \frac{1}{144}\mathbb{E}_{\cA,\bnu'}\left[N_1\right] \!+\! \frac{\epsilon}{12} T \!-\! \frac{\epsilon^2}{48}T\s{\mathbb{E}_{\cA,\bnu'}\left[N_1\right]} \\
    &\quad - \frac{\epsilon}{12} \mathbb{E}_{\cA,\bnu}\left[N_3\right] \\
    &\geq \frac{1}{144}\mathbb{E}_{\cA,\bnu'}\left[N_1\right]  \!+\! \frac{\epsilon}{24} T \!-\! \frac{\epsilon^2}{48}T\s{\mathbb{E}_{\cA,\bnu'}\left[N_1\right]} 
    \end{align*}
    Take $\epsilon = T^{-1/3}/2 < 1/6$, we have,
    \begin{align*}
    \mathbb{E}\left[R_{\cA,\bnu'}(T)\right] \geq \frac{1}{144}\mathbb{E}_{\bnu'}\left[N_1\right]  \!+\! \frac{1}{48} T^{\frac{2}{3}} \!-\! \frac{1}{192} T^{\frac{1}{3}} \s{ \mathbb{E}_{\bnu'}\left[N_1\right]} 
    \end{align*}
    If $\mathbb{E}_{\bnu'}\left[N_1\right]\geq T^{2/3}/16$, then by~\eqref{eq:lbalter}, we have,
    \begin{align*}
    \mathbb{E}\left[R_{\cA,\bnu'}(T)\right] \geq \frac{1}{144}\mathbb{E}_{\bnu'}\left[N_1\right] \geq \frac{1}{2304} T^{\frac{2}{3}} 
    \end{align*}
    Else, we have,
    \begin{align*}
    \mathbb{E}\left[R_{\cA,\bnu'}(T)\right] &\geq \frac{1}{144}\mathbb{E}_{\bnu'}\left[N_1\right]  \!+\! \frac{1}{48} T^{\frac{2}{3}} \!-\! \frac{1}{192} T^{\frac{1}{3}} \s{ \mathbb{E}_{\bnu'}\left[N_1\right]}  \\
        &\geq \frac{1}{48} T^{\frac{2}{3}} - \frac{1}{192} T^{\frac{1}{3}} \frac{1}{4} T^{\frac{1}{3}} \\
        &\geq \frac{1}{2304} T^{\frac{2}{3}} 
    \end{align*}

\end{proof}


\paragraph{A simple method derived from the lower bound proof.} The lower bound proof actually indicates that the minimum losses for each dimension are crucial in order to achieve optimality. To this regard, following a simple scheme of forced exploration, then exploit, we could easily derive an algorithm matching the lower bound for the minimax regret. Detailed description of the algorithm and analysis can be found in Appendix~\ref{app:cp}. Despite its simplicity, the computation complexity scales exponentially with $d$. To cope with this issue, we develop a second algorithm with the two-player game scheme.

\subsection{A minimax game}\label{sec:algos}

We propose an algorithm called \underline{C}ombinatorial \underline{G}ame (\CG), whose pseudo-code is displayed in Algorithm~\ref{alg:betterrm}.

The idea is to introduce a two-player game scheme as recently studied by~\cite{degenne2020structure}, where one tries to identify the best allocation of probability across the arms while the opponent always replies with a best response. More specifically, at each round $t$ we request from the first learner its probability allocation, and pull arms accordingly. When the losses are revealed, we calculate the fictitious losses the learner would have suffered if it had played the arm, and feed the fictitious losses to the learner, as displayed in Algorithm \ref{alg:betterrm}. The learner is supposed to have regret bounds similar to \AH~\citep{derooij2014hedge}.







Concretely, the arm with the smallest empirical $i$-th loss is denoted by $\hat{\star}_i\!\eqdef\!\argmin_{k\in[K]} \hloss{i}{k}$ for $i\in [d]$. Let $\operatorname{LCB}(\blstar{}{})$ be the lower confidence bound of $\blstar{}{}$, calculated as,
\begin{align}
    \LCB(\blstar{}{})_{i,k} =   \hhlstar{i}{k} - \sqrt{\frac{2\log(T)}{N_{t,k}}} - \sqrt{\frac{2\log(T)}{N}}\;, \label{eq:lcb}
\end{align}
where $\hloss{i}{k|\hat{\star}} = \hloss{i}{k} - \hloss{i}{\hat{\star}_i}$. Then we can define the best response in an optimistic fashion: $\bx_t=\argmax_{\bx\in \Sigma_d} \bx^\top \operatorname{LCB}(\blstar{}{}) \bomega_t$, and feed the optimistic loss $\operatorname{LCB}(\blstar{}{})^\top \bx_t$ back to $\cL$.

\begin{algorithm}[ht] \caption{The algorithm of \CG}\label{alg:betterrm}
{\begin{algorithmic}[1]
\STATE {\bfseries Input:} Time horizon $T$, number of forced exploration rounds $N$, learner $\cL$ for linear losses on the simplex
\STATE Pull each arm for $N$ rounds
\STATE Start an instance of $\cL$ and set $N_{1,k} = 0$ for all $k\in [K]$ 
\FOR{$t=1,\cdots,T-KN$}
	\STATE $\hat{\star}_i = \argmin_{k\in[K]} \hloss{i}{k}$ for $i\in [d]$
	\STATE Get $\bomega_t$ from $\cL$
	\STATE // Track the weights
	\STATE Play arm $\cA_t = \argmin_{k\in [K]} \left(N_{t,k}-\sum_{\tau =0}^{t-1}\omega_{\tau,k}\right)$
    \STATE $N_{t+1,\cA_t} = N_{t,\cA_t}+1$ and $N_{t+1,k} = N_{t,k}$ for $k\neq \cA_t$
	\STATE $\bx_t=\argmax_{\bx\in \Sigma_d} \bx^\top \operatorname{LCB}(\blstar{}{}) \bomega_t$,
	\STATE where $\operatorname{LCB}(\blstar{}{})$ is calculated as in~\eqref{eq:lcb}
    \STATE // Feed optimistic loss
	\STATE Feed loss $\operatorname{LCB}(\blstar{}{})^\top \bx_t$ to $\cL$ and update $\operatorname{LCB}(\blstar{}{})$
\ENDFOR
\end{algorithmic}}
\end{algorithm}

\subsection{Analysis of \CG}


We show that \CG achieves a matching upper bound for the regret. We first show that the empirical estimation is valid with high probability at each round. Specifically, we have the following lemma.

\begin{lemma}\label{lemma:conc}
Define the following event:
\begin{align*}
    E_{1,t}\triangleq \left\{\forall k\in[K],i\in [d]: \abs{\hloss{i}{k}- \loss{i}{k}}\leq \sqrt{\dfrac{2\log(t)}{N_{k}}} \right\},
\end{align*}
where $N_{k}$ denotes the number of pulls of arm $k$. This event happens with probability at least $1- dK/t^2$:
\begin{align*}
    \PP {E_{1,t}} \geq 1- \frac{dK}{t^2}
\end{align*}
\end{lemma}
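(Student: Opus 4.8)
The plan is to combine a Hoeffding‑type tail bound for a \emph{fixed} number of i.i.d.\ samples with a union bound that absorbs the randomness of the pull counts $N_k=N_{t,k}$.

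\textbf{Decoupling the random counts.} Fix $k\in[K]$ and $i\in[d]$. Write $\y{i}{1}{k},\y{i}{2}{k},\ldots$ for the i.i.d.\ stream of coordinate‑$i$ observations that arm $k$ produces when pulled, and set $\bar{\ell}^{(i)}_{n,k}\eqdef\frac1n\sum_{s=1}^{n}\y{i}{s}{k}$. On the event $\{N_{t,k}=n\}$ we have $\hloss{i}{k}=\bar{\ell}^{(i)}_{n,k}$, and since $N_k$ takes values in $\{1,\ldots,t\}$ (one cannot pull an arm more often than the number of rounds elapsed), a union bound over these values gives
\[
  \PP{\abs{\hloss{i}{k}-\loss{i}{k}}>\sqrt{\tfrac{2\log t}{N_k}}}
  \;\le\;\sum_{n=1}^{t}\PP{\abs{\bar{\ell}^{(i)}_{n,k}-\loss{i}{k}}>\sqrt{\tfrac{2\log t}{n}}}\,.
\]

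\textbf{Fixed‑$n$ concentration and the two union bounds.} Because $\dist{i}{k}$ is a $\tfrac14$‑sub‑Gaussian one‑parameter exponential family, Hoeffding's inequality yields $\PP{\abs{\bar{\ell}^{(i)}_{n,k}-\loss{i}{k}}>x}\le 2\exp(-2nx^2)$ for every $n\ge1$ and $x>0$; plugging in $x=\sqrt{2\log t/n}$ makes the right‑hand side equal to $2t^{-4}$, independently of $n$. Summing over $n\le t$ bounds the left‑hand side of the display above by $2t^{-3}$ for each pair $(i,k)$, and a further union bound over the $dK$ pairs gives $\PP{E_{1,t}^{c}}\le 2dK\,t^{-3}\le dK\,t^{-2}$ for all $t\ge2$ (the statement is vacuous for $t=1$), which is exactly the claimed inequality.

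The only point that needs genuine care is the decoupling step: since the learner's decisions depend on the past observations, $N_{t,k}$ is \emph{not} independent of the sequence $\y{i}{1}{k},\ldots,\y{i}{N_{t,k}}{k}$, so one cannot simply apply a single concentration inequality with ``$n=N_{t,k}$''. The union over the at most $t$ possible values of the count — equivalently, a maximal inequality over $n\le t$ — resolves this, and it is essentially free here because the extra factor $t$ is dominated by the slack in the fixed‑$n$ tail bound. If one only has a $1$‑sub‑Gaussian tail, as for the unit‑variance Gaussian instances used in Theorem~\ref{thm:lb}, the same argument goes through verbatim after replacing $\sqrt{2\log t/n}$ by $\sqrt{c\log t/n}$ for a slightly larger absolute constant $c$, which changes only constants downstream.
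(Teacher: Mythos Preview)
Your proof is correct and follows the same approach the paper has in mind: Hoeffding's inequality combined with a union bound, which is literally all the paper says. You go further by explicitly handling the randomness of $N_k$ via the extra union over $n\le t$ (the standard ``tape'' decoupling), a point the paper glosses over; the slack you get from the $1/4$-sub-Gaussian tail absorbs that factor and still lands you at $dK/t^{2}$, and your closing remark correctly flags that for unit-variance Gaussians the constant in the radius would need to be enlarged.
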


\begin{proof}
This is a direct application of the Hoeffding's Inequality with the union bound.
\end{proof}

With the Lemma above, we proceed to show that $\LCB(\blstar{}{})$ is a valid approximation for $\blstar{}{}$. Concretely, we have the following lemma.

\begin{lemma}
Assume that $E_{1,t}$ holds, we have,
\[
    \LCB_t(\blstar{}{})_{i,k} \le \lstar{i}{k}\,,
\]
\[
    \lstar{i}{k} \le \LCB_t(\blstar{}{})_{i,k} + 2\sqrt{\frac{2\log(t)}{N_{t,k}}} +2 \sqrt{\frac{2\log(t)}{N}}.
\]
\end{lemma}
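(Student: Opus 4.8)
The plan is to obtain both inequalities by directly chaining the one‑sided concentration bounds furnished by the event $E_{1,t}$, using only two structural facts together with the forced‑exploration guarantee. The first fact is that $\star_i$ minimizes the \emph{true} $i$-th loss, so $\loss{i}{\star_i}=\min_k\loss{i}{k}\le\loss{i}{\hat{\star}_i}$; the second is that $\hat{\star}_i$ minimizes the \emph{empirical} $i$-th loss, so $\hloss{i}{\hat{\star}_i}=\min_k\hloss{i}{k}\le\hloss{i}{\star_i}$. Finally, after the $KN$ forced pulls every arm has been sampled at least $N$ times, hence $N_{t,k}\ge N$ for all $k$, and in particular the estimate $\hloss{i}{\hat{\star}_i}$ has confidence radius at most $\sqrt{2\log(t)/N}$ — this is exactly why the second radius $\sqrt{2\log(t)/N}$ is built into the definition of the $\LCB$ in~\eqref{eq:lcb}.

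For the first inequality I would expand $\LCB_t(\blstar{}{})_{i,k}=\hhlstar{i}{k}-\sqrt{2\log(t)/N_{t,k}}-\sqrt{2\log(t)/N}$ with $\hhlstar{i}{k}=\hloss{i}{k}-\hloss{i}{\hat{\star}_i}$. On $E_{1,t}$ we have $\hloss{i}{k}\le\loss{i}{k}+\sqrt{2\log(t)/N_{t,k}}$ and $\hloss{i}{\hat{\star}_i}\ge\loss{i}{\hat{\star}_i}-\sqrt{2\log(t)/N_{t,\hat{\star}_i}}\ge\loss{i}{\hat{\star}_i}-\sqrt{2\log(t)/N}$. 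Substituting, the two subtracted radii in the $\LCB$ exactly absorb the two slack terms, leaving $\LCB_t(\blstar{}{})_{i,k}\le\loss{i}{k}-\loss{i}{\hat{\star}_i}$; since $\loss{i}{\hat{\star}_i}\ge\loss{i}{\star_i}$ this is at most $\loss{i}{k}-\loss{i}{\star_i}=\lstar{i}{k}$, which is the claim.

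For the second inequality I would start from $\lstar{i}{k}=\loss{i}{k}-\loss{i}{\star_i}$ and again invoke $E_{1,t}$: $\loss{i}{k}\le\hloss{i}{k}+\sqrt{2\log(t)/N_{t,k}}$, and $\loss{i}{\star_i}\ge\hloss{i}{\star_i}-\sqrt{2\log(t)/N}\ge\hloss{i}{\hat{\star}_i}-\sqrt{2\log(t)/N}$, where the last step is precisely optimality of $\hat{\star}_i$ for the empirical losses. This yields $\lstar{i}{k}\le\hhlstar{i}{k}+\sqrt{2\log(t)/N_{t,k}}+\sqrt{2\log(t)/N}$. Substituting $\hhlstar{i}{k}=\LCB_t(\blstar{}{})_{i,k}+\sqrt{2\log(t)/N_{t,k}}+\sqrt{2\log(t)/N}$ from the definition of the $\LCB$ adds one further copy of each radius and produces the stated bound $\lstar{i}{k}\le\LCB_t(\blstar{}{})_{i,k}+2\sqrt{2\log(t)/N_{t,k}}+2\sqrt{2\log(t)/N}$.

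There is no genuine obstacle here; the proof is a short deterministic calculation on the high‑probability event $E_{1,t}$. The only point requiring care is the bookkeeping of which pull count controls which estimate — in particular replacing the index‑dependent radius of $\hloss{i}{\hat{\star}_i}$ (resp.\ $\hloss{i}{\star_i}$) by the uniform quantity $\sqrt{2\log(t)/N}$, and making sure the two empirical minimizers $\hat{\star}_i$ and $\star_i$ are used on the correct side of each chain of inequalities (empirical minimality of $\hat{\star}_i$ for the lower estimate, true minimality of $\star_i$ for the quantity $\lstar{i}{k}$ itself).
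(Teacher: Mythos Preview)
Your proposal is correct and is exactly the argument the paper has in mind: the paper's own proof consists of the single sentence ``This is a easy deduction of Lemma~\ref{lemma:conc}'', and you have accurately supplied the omitted chain of inequalities (concentration of $\hloss{i}{k}$ and $\hloss{i}{\hat{\star}_i}$ via $E_{1,t}$, the forced-exploration bound $N_{t,\cdot}\ge N$, and the two optimality properties of $\star_i$ and $\hat{\star}_i$).
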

\begin{proof}
This is a easy deduction of Lemma~\ref{lemma:conc}.
\end{proof}

\begin{restatable}{theorem}{restatecg}\label{thm:cg}
For $T\geq dK$, \CG achieves a $\tcO(T^{2/3})$ regret.
\end{restatable}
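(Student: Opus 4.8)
The plan is to decompose the regret of \CG into three pieces: (i) the cost of the $KN$ forced-exploration rounds, (ii) the learner's regret on the sequence of linear losses it is fed, and (iii) the approximation error incurred by feeding optimistic $\LCB$ losses rather than true relative losses, together with the tracking error between the played counts $N_{t,k}$ and the cumulative weights $\sum_\tau \omega_{\tau,k}$. First I would work on a high-probability event: intersect $E_{1,t}$ for all $t\le T$, which by Lemma~\ref{lemma:conc} and a union bound fails with probability at most $\sum_t dK/t^2 = O(dK)$; since each round contributes at most $O(T)$ to the regret in the worst case, the contribution of the complement event is $O(dK)$, negligible against $T^{2/3}$ for $T\ge dK$. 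On this event the second Lemma gives the two-sided sandwich $\LCB_t(\blstar{}{})_{i,k}\le \lstar{i}{k}\le \LCB_t(\blstar{}{})_{i,k}+2\sqrt{2\log(t)/N_{t,k}}+2\sqrt{2\log(t)/N}$, which is the engine for step (iii).

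Next I would set up the game-regret bookkeeping. Writing $\bomega^\star$ for the optimal weight and recalling $R_\cA(T)=\mathbb{E}[\norm{\bLoss{\cA|\star}}_\infty]-\wlstar{\bomega^\star}T$, I would upper bound $\norm{\bLoss{\cA|\star}}_\infty=\max_{i}\sum_{t}\lstar{i}{\cA_t}$ by $\max_{\bx\in\Sigma_d}\bx^\top\big(\sum_t \blstar{}{\cA_t}\big)$; replacing $\blstar{}{\cA_t}$ by $\blstar{}{}\bomega_t$ costs the tracking error, which the rule in line~8 of Algorithm~\ref{alg:betterrm} controls by $O(K\log T)$ in each coordinate via the standard pigeonhole argument for \CT-style tracking (\cite{garivier2016tracknstop,degenne2020structure}). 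So up to an $O(K\log T)$ additive term the relevant quantity is $\max_{\bx}\sum_t \bx^\top \blstar{}{}\bomega_t$. Now the two-player-game analysis of~\cite{degenne2020structure} applies: $\cL$ plays $\bomega_t$ against the best response $\bx_t=\argmax_{\bx}\bx^\top\LCB(\blstar{}{})\bomega_t$, and an \AH-type learner (\cite{derooij2014hedge}) guarantees $\sum_t \bx_t^\top\LCB(\blstar{}{})\bomega_t \le \min_{\bomega}\sum_t \bx_t^\top\LCB(\blstar{}{})\bomega + O(\sqrt{T\log K})$, hence $\le \wlstar{\bomega^\star}T + O(\sqrt{T\log K})$ using that $\LCB\le\blstar{}{}$ and $\max_{\bx}\bx^\top\blstar{}{}\bomega^\star=\wlstar{\bomega^\star}$. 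Combining, $\max_{\bx}\sum_t\bx^\top\blstar{}{}\bomega_t \le \sum_t \bx_t^\top\blstar{}{}\bomega_t$ is bounded by the game value plus the $\LCB$ gap $\sum_t \bx_t^\top(\blstar{}{}-\LCB(\blstar{}{}))\bomega_t$.

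The main obstacle is bounding this last $\LCB$-gap term, i.e.\ showing $\sum_{t}\sum_{i,k} (\lstar{i}{k}-\LCB_t(\blstar{}{})_{i,k})\,x_{t,i}\,\omega_{t,k}=\tcO(T^{2/3})$. By the sandwich, the summand is at most $2\sqrt{2\log T/N_{t,k}}+2\sqrt{2\log T/N}$ weighted by $x_{t,i}\omega_{t,k}$. The second piece sums to $O(\sqrt{\log T/N}\cdot T)$. For the first piece I would bound $\sum_t \omega_{t,k}/\sqrt{N_{t,k}}$; since tracking keeps $N_{t,k}\approx\sum_{\tau\le t}\omega_{\tau,k}$ up to $O(K\log T)$, a comparison with $\int \d s/\sqrt{s}$ gives $\sum_t \omega_{t,k}/\sqrt{N_{t,k}}=O(\sqrt{T})$ per arm after $N$ forced pulls, so the first piece is $\tcO(\sqrt{T})$ — wait, that is too good, so the genuine bottleneck is simply the forced-exploration cost: the $KN$ burn-in rounds each cost $O(1)$ relative loss, contributing $O(KN)$, while the $\sqrt{2\log T/N}\cdot T$ term contributes $\tcO(T/\sqrt N)$; optimizing $KN + T/\sqrt N$ gives $N\asymp (T/K)^{2/3}$ and total regret $\tcO(K^{1/3}T^{2/3})$. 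I would therefore finish by choosing $N=\lceil (T/K)^{2/3}\rceil$, collecting the terms $O(KN)+\tcO(T/\sqrt N)+O(\sqrt{T\log K})+O(K\log T)+O(dK)=\tcO(T^{2/3})$, and taking expectations to conclude $\mathbb{E}[R_\CG(T)]=\tcO(T^{2/3})$.
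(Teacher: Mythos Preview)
Your plan is the same as the paper's: split off the $KN$ forced-exploration cost, handle the low-probability failure of concentration, pass from played arms to weights via tracking, replace $\blstar{}{}$ by $\LCB(\blstar{}{})$ paying the sandwich gap, invoke the learner's $\tcO(\sqrt{T})$ regret against the fixed comparator $\bomega^\star$, and finally balance $KN$ against $T\sqrt{\log(T)/N}$ to obtain $\tcO(T^{2/3})$.

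Two steps in your write-up do not go through as stated and should be repaired. First, the bad-event accounting: if you work on the global intersection $\bigcap_{t\le T} E_{1,t}$, its complement has probability $\sum_t dK/t^2=O(dK)$, but the regret on that event can be as large as $T$, so the expected contribution is $O(dK\cdot T)$, not $O(dK)$. The paper avoids this by keeping the indicator $\II{\neg E_{1,t}}$ \emph{inside} the sum over rounds, so that
\[
\EE{\max_{i}\sum_{t\ge KN} \lstar{i}{\cA_t}\II{\neg E_{1,t}}}\le \sum_{t}\PP{\neg E_{1,t}}\le \frac{\pi^2 dK}{6}\,.
\]
Second, the inequality $\max_{\bx}\sum_t \bx^\top\blstar{}{}\bomega_t\le\sum_t \bx_t^\top\blstar{}{}\bomega_t$ is not valid, because $\bx_t$ is the best response to $\LCB(\blstar{}{})\bomega_t$, not to $\blstar{}{}\bomega_t$. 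The correct chain (which the paper uses) is
\[
\max_{\bx}\sum_t\bx^\top\blstar{}{}\bomega_t
\;\le\;\sum_t\max_{\bx}\bx^\top\blstar{}{}\bomega_t
\;\le\;\sum_t\bx_t^\top\LCB(\blstar{}{})\bomega_t+(\text{gap})\,,
\]
then apply the learner's regret bound to compare against $\bomega^\star$, and only afterwards use $\LCB(\blstar{}{})\le\blstar{}{}$ to reach $(T-KN)\wlstar{\bomega^\star}$. With these two fixes your argument coincides with the paper's proof.
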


\begin{proof}
Recall the event $E_{1,t}$ defined in Lemma~\ref{lemma:conc}, we can decompose the regret as follows.
\begingroup
\allowdisplaybreaks
\begin{align*}
   &\EE{R_{\CG}(T)} = \EE{\max_{i\in[d]}\sum_{t=1}^T\lstar{i}{\cA_t}} - \wlstar{\bomega^\star}T \\
    \leq\ & KN+ \EE{\max_{i\in[d]}\sum_{t\geq KN}\lstar{i}{\cA_t}\II{\neg E_{1,t}}} \\\
    &\quad +\EE{\max_{i\in[d]}\sum_{t\geq KN}\lstar{i}{\cA_t}\II{E_{1,t}}} - \wlstar{\bomega^\star}T\:.
\end{align*}
\endgroup

For the second term, due to Lemma~\ref{lemma:conc}, we have
\begingroup
\allowdisplaybreaks
\begin{align*}
    &\EE{\max_{i\in[d]}\sum_{t\geq KN}\lstar{i}{\cA_t}\II{\neg E_{1,t}}} \\
    \leq\ & \EE{\sum_{t\geq KN}\II{\neg E_{1,t}}}\leq \sum_{t\geq KN}\frac{dK}{t^2} \leq \frac{\pi^2dK}{6}\,.
\end{align*}
\endgroup

For the third term, we first decompose the regret into a term related to strategies of both players and the tracking error. 
\begingroup
\allowdisplaybreaks
\begin{align}
    &\EE{\max_{i\in[d]}\sum_{t\geq KN}\lstar{i}{\cA_t}\II{E_{1,t}}} \nonumber\\
    \leq\ & \EE{\max_{i\in[d]} \sum_{k=1}^K \sum_{t= KN+1}^T \lstar{i}{k}\II{E_{1,t},\cA_t = k}}\nonumber\\
    \leq\ & \EE{\max_{i\in[d]} \sum_{k=1}^K\sum_{t=KN+1}^{T} \omega_{t,k}\lstar{i}{k}\II{E_{1,t}}}\nonumber\\
    & + \EE{\max_{i\in[d]}\sum_{k=1}^K \sum_{t=1+KN}^{T}\!\left(\II{\cA_t = k}\!-\! \omega_{t,k}\right)\II{E_{1,t}}\lstar{i}{k}}\nonumber\\
     \leq\ & \EE{\max_{i\in[d]} \sum_{k=1}^K\sum_{t=KN+1}^{T} \omega_{t,k}\lstar{i}{k}\II{E_{1,t}}} \nonumber\\
     & +\EE{\max_{i\in[d]}\sum_{k=1}^K \sum_{t=1+KN}^{T}\left(\II{\cA_t = k}- \omega_{t,k}\right)\lstar{i}{k}} \nonumber\\ 
     &+ \EE{\max_{i\in[d]}\sum_{k=1}^K \sum_{t=1+KN}^{T}\!\!\!\!\left( \omega_{t,k}\!-\!\II{\cA_t = k}\right)\lstar{i}{k}\II{\neg E_{1,t}}}\nonumber\\
    \leq\ &\EE{\sum_{t=KN+1}^{T} \max_{\bx\in \Sigma^d} \bx^\top \blstar{}{} \bomega_t\II{E_{1,t}}} + K + \frac{\pi^2dK}{6}\label{eq:tracking}\:,
\end{align}
where~\eqref{eq:tracking} adopts Lemma~15 by~\cite{garivier2016tracknstop}. The first term in~\eqref{eq:tracking} can be further estimated. 
\begin{align}
    &\EE{\sum_{t=KN+1}^{T} \max_{\bx\in \Sigma^d} \bx^\top \blstar{}{} \bomega_t\II{E_{1,t}}}\nonumber\\
    \leq\ & \EE{\sum_{t=KN+1}^{T}  \max_{\bx\in \Sigma^d} \bx^\top \LCB(\blstar{}{}) \bomega_t\II{E_{1,t}}}\nonumber \\
    & + \EE{2\sum_{t=KN+1}^{T}\sum_{k=1}^K \left(\sqrt{\frac{2\log(T)}{N_{t,k}}} \right)\omega_{t,k}}\nonumber \\
    &+ 2T\sqrt{\frac{2\log(T)}{N}} \nonumber\\
    \leq\ & \EE{\sum_{t=KN+1}^{T}  \max_{\bx\in \Sigma^d} \bx^\top \LCB(\blstar{}{}) \bomega_t\II{E_{1,t}}} \nonumber\\
    &+ 2(K^2+\sqrt{2KT})\sqrt{2\log(T)} + 2T\sqrt{\frac{2\log(T)}{N}} \label{eq:track2}\\
    \leq\ & \EE{\sum_{t=KN+1}^{T}  \max_{\bx\in \Sigma^d} \bx^\top \LCB(\blstar{}{}) \bomega_t} \nonumber\\
    &+ \!2(K^2\!+\!\sqrt{2KT})\sqrt{2\log(T)}\!+\! 2T\sqrt{\frac{2\log(T)}{N}}\!+\! \frac{\pi^2}{6}dK\nonumber\\
    \leq\ & \EE{\sum_{t=KN+1}^{T}  \bx_t^\top \LCB(\blstar{}{}) \bomega^\star} + \sqrt{T}\nonumber \\
    &+ \!2(K^2\!+\!\sqrt{2KT})\sqrt{2\log(T)}\!+\! 2T\sqrt{\frac{2\log(T)}{N}}\!+\! \frac{\pi^2}{6}dK\label{eq:orcreg}\:,
\end{align}  
where~\eqref{eq:track2} adopts Lemma~9 by \cite{degenne2019game} and~\eqref{eq:orcreg} uses the fact that $\cL$ has regret $\sqrt{T}$. Now we are only left to bound the first term in~\eqref{eq:orcreg}.
\begin{align}
    &\EE{\sum_{t=KN+1}^{T}  \bx_t^\top \LCB(\blstar{}{}) \bomega^\star}\nonumber\\
    \leq\ & \EE{\sum_{t=KN+1}^{T}  \bx_t^\top \blstar{}{} \bomega^\star} + \frac{\pi^2}{6}dK\nonumber\\
    \leq\ & (T-KN)  \max_{\bx\in \Sigma^d} \bx^\top \blstar{}{} \bomega^\star +  \frac{\pi^2}{6}dK\nonumber\\
    =\ & (T-KN)\wlstar{\bomega^\star} + \frac{\pi^2}{6}dK\nonumber\:,
\end{align}
\endgroup
Therefore, aggregating all the terms above we have the regret is upper bounded by $\cO(KN+T\sqrt{\log(T)/N})=\tcO(T^{2/3})$ by setting $N=(K^2T^2\log(T))^{1/3}$.
\end{proof}

\paragraph{Adaptive algorithm:}
The term $T^{2/3}$ comes from the trade-off between the exploration of $N$ rounds and the confidence bonus $\sqrt{2\log(T)/N}$. In fact, \CG does not need time horizon $T$ and forced exploration rounds $N$ as inputs. \CG can be easily refined by keeping each arm pulled for at least $t^{2/3}$ rounds at time $t$ and using a learner which is also adaptive, e.g., \AH~\citep{derooij2014hedge}.



\section{Discussion}\label{sec:discussion}

We studied a new setup of multi-armed bandit with vector losses. The main purpose of the paper was to investigate a framework for which we carefully constructed appropriate performance measures. We derived a problem-dependent lower bound of the sample complexity for best-arm identification and discussed how to design asymptotically optimal matching algorithms. We also derived a worst-case lower bound for regret minimization and designed a minimax game algorithm that achieves matching upper bound.


We are mainly interested in the maximum of different losses in this work. One possible future direction is to study how can we extend to a more general objective function instead of taking the maximum. Another interesting problem is to investigate whether we can derive a problem-dependent lower bound for regret minimization, for which the alternative bandit problem has a different optimal weight instead of a different single optimal arm.

\bibliographystyle{apalike}
\bibliography{Major}

\begin{thebibliography}{}

\bibitem[Agrawal and Goyal, 2013]{agrawal2013further}
Agrawal, S. and Goyal, N. (2013).
\newblock {Further optimal regret bounds for Thompson sampling}.
\newblock In {\em Proceedings of the 16th International Conference on
  Artificial Intelligence and Statistics (AIStats)}, pages 99--107.

\bibitem[Audibert and Bubeck, 2010]{audibert2010budget}
Audibert, J.-Y. and Bubeck, S. (2010).
\newblock {Best arm identification in multi-armed bandits}.
\newblock In {\em Proceedings of the 23rd Annual Conference on Learning Theory
  (CoLT)}.

\bibitem[Auer et~al., 2002]{auer2002ucb}
Auer, P., Cesa-Bianchi, N., and Fischer, P. (2002).
\newblock {Finite-time analysis of the multi-armed bandit problem}.
\newblock {\em Machine Learning Journal}, 47(2-3):235--256.

\bibitem[Auer et~al., 2016]{auer2016pareto}
Auer, P., Chiang, C.~K., Ortner, R., and Drugan, M.~M. (2016).
\newblock {Pareto front identification from stochastic bandit feedback}.
\newblock In {\em Proceedings of the 19th International Conference on
  Artificial Intelligence and Statistics (AIStats)}, pages 939--947.

\bibitem[Baransi et~al., 2014]{baransi2014besa}
Baransi, A., Maillard, O.-a., and Mannor, S. (2014).
\newblock {Sub-sampling for multi-armed bandits}.
\newblock In {\em Joint European Conference on Machine Learning and Knowledge
  Discovery in Databases 2014 (ECML-PKDD)}.

\bibitem[Baudry et~al., 2020]{baudry2020}
Baudry, D., Kaufmann, E., and Maillard, O.-A. (2020).
\newblock {Sub-sampling for efficient non-parametric bandit exploration}.
\newblock In {\em Advances in Neural Information Processing Systems 34
  (NeurIPS)}.

\bibitem[Bubeck et~al., 2009]{bubeck2009pure}
Bubeck, S., Munos, R., and Stoltz, G. (2009).
\newblock {Pure exploration in multi-armed bandits problems}.
\newblock In {\em Proceedings of the 20th International Conference on
  Algorithmic Learning Theory (ALT)}, pages 23--37.

\bibitem[Capp{\'{e}} et~al., 2013]{cappe2013klucb}
Capp{\'{e}}, O., Garivier, A., Maillard, O.~A., Munos, R., and Stoltz, G.
  (2013).
\newblock {Kullback-Leibler upper confidence bounds for optimal sequential
  allocation}.
\newblock {\em Annals of Statistics}, 41(3):1516--1541.

\bibitem[Carpentier and Locatelli, 2016]{carpentier2016budget}
Carpentier, A. and Locatelli, A. (2016).
\newblock {Tight (lower) bounds for the fixed budget best arm identification
  bandit problem}.
\newblock In {\em Proceedings of the 29th Annual Conference on Learning Theory
  (CoLT)}.

\bibitem[Chan, 2020]{chan2020ssmc}
Chan, H.~P. (2020).
\newblock {The multi-armed bandit problem: An efficient nonparametric
  solution}.
\newblock {\em Annals of Statistics}, 48(1):346--373.

\bibitem[Chernoff, 1959]{chernoff1959}
Chernoff, H. (1959).
\newblock {Sequential design of experiments}.
\newblock {\em The Annals of Mathematical Statistics}, 30(3):755--770.

\bibitem[de~Rooij et~al., 2014]{derooij2014hedge}
de~Rooij, S., {Van Erven}, T., Gr{\"{u}}nwald, P.~D., and Koolen, W.~M. (2014).
\newblock {Follow the leader if you can, hedge if you must}.
\newblock {\em Journal of Machine Learning Research}, 15:1281--1316.

\bibitem[Degenne et~al., 2019]{degenne2019game}
Degenne, R., Koolen, W., and M{\'{e}}nard, P. (2019).
\newblock {Non-asymptotic pure exploration by solving games}.
\newblock In {\em Advances in Neural Information Processing Systems 32
  (NeurIPS)}.

\bibitem[Degenne and Koolen, 2019]{degenne2019pure}
Degenne, R. and Koolen, W.~M. (2019).
\newblock {Pure exploration with multiple correct answers}.
\newblock In {\em Advances in Neural Information Processing Systems 32
  (NeurIPS)}.

\bibitem[Degenne et~al., 2020a]{degenne2020game}
Degenne, R., M{\'{e}}nard, P., Shang, X., and Valko, M. (2020a).
\newblock {Gamification of pure exploration for linear bandits}.
\newblock In {\em Proceedings of the 37th International Conference on Machine
  Learning (ICML)}.

\bibitem[Degenne et~al., 2020b]{degenne2020structure}
Degenne, R., Shao, H., and Koolen, W.~M. (2020b).
\newblock {Structure Adaptive Algorithms for Stochastic Bandits}.
\newblock In {\em Proceedings of the 37th International Conference on Machine
  Learning (ICML)}.

\bibitem[Drugan and Nowe, 2013]{drugan2013}
Drugan, M.~M. and Nowe, A. (2013).
\newblock {Designing multi-objective multi-armed bandits algorithms: A study}.
\newblock In {\em Proceedings of the 2013 International Joint Conference on
  Neural Networks (IJCNN)}, pages 2358--2365.

\bibitem[Drugan and Nowe, 2014]{drugan2014}
Drugan, M.~M. and Nowe, A. (2014).
\newblock {Scalarization based Pareto optimal set of arms identification
  algorithms}.
\newblock In {\em Proceedings of the 2014 International Joint Conference on
  Neural Networks (IJCNN)}, pages 2690--2697.

\bibitem[Durand et~al., 2018]{durand2018contextual}
Durand, A., Achilleos, C., Iacovides, D., Strati, K., and Pineau, J. (2018).
\newblock {Contextual bandits for adapting treatment in a mouse model of de
  Novo Carcinogenesis}.
\newblock In {\em Proceedings of the 3rd Machine Learning for Health Care
  Conference (MLHC)}.

\bibitem[Even-dar et~al., 2003]{even-dar2003confidence}
Even-dar, E., Mannor, S., and Mansour, Y. (2003).
\newblock {Action elimination and stopping conditions for reinforcement
  learning}.
\newblock In {\em Proceedings of the 20th International Conference on Machine
  Learning (ICML)}, pages 162--169.

\bibitem[Gabillon et~al., 2012]{gabillon2012ugape}
Gabillon, V., Ghavamzadeh, M., and Lazaric, A. (2012).
\newblock {Best arm identification: A unified approach to fixed budget and
  fixed confidence}.
\newblock In {\em Advances in Neural Information Processing Systems 25 (NIPS)},
  pages 3212--3220.

\bibitem[Garivier and Kaufmann, 2016]{garivier2016tracknstop}
Garivier, A. and Kaufmann, E. (2016).
\newblock {Optimal best arm identification with fixed confidence}.
\newblock In {\em Proceedings of the 29th Annual Conference on Learning Theory
  (CoLT)}.

\bibitem[Garivier et~al., 2018]{garivier2018explore}
Garivier, A., M{\'{e}}nard, P., and Stoltz, G. (2018).
\newblock {Explore first, exploit next: The true shape of regret in bandit
  problems}.
\newblock {\em Mathematics of Operations Research}, 44(2):377--399.

\bibitem[Honda and Takemura, 2015]{honda2015imed}
Honda, J. and Takemura, A. (2015).
\newblock {Non-asymptotic analysis of a new bandit algorithm for semi-bounded
  rewards}.
\newblock {\em Journal of Machine Learning Research}, 16:3721--3756.

\bibitem[Huo and Fu, 2017]{huo2017risk}
Huo, X. and Fu, F. (2017).
\newblock {Risk-aware multi-armed bandit problem with application to portfolio
  selection}.
\newblock {\em Royal Society Open Science}, 4(11).

\bibitem[Jamieson et~al., 2014]{jamieson2014lilucb}
Jamieson, K., Malloy, M., Nowak, R., and Bubeck, S. (2014).
\newblock {lil'UCB: An optimal exploration algorithm for multi-armed bandits}.
\newblock In {\em Proceedings of the 27th Annual Conference on Learning Theory
  (CoLT)}, pages 423--439.

\bibitem[Kalyanakrishnan et~al., 2012]{kalyanakrishnan2012lucb}
Kalyanakrishnan, S., Tewari, A., Auer, P., and Stone, P. (2012).
\newblock {PAC subset selection in stochastic multi-armed bandits}.
\newblock In {\em Proceedings of the 29th International Conference on Machine
  Learning (ICML)}, pages 655--662.

\bibitem[Karnin et~al., 2013]{karnin2013sha}
Karnin, Z., Koren, T., and Somekh, O. (2013).
\newblock {Almost optimal exploration in multi-armed bandits}.
\newblock In {\em Proceedings of the 30th International Conference on Machine
  Learning (ICML)}, pages 1238--1246.

\bibitem[Katz-Samuels and Scott, 2019]{kats-samuels2019top}
Katz-Samuels, J. and Scott, C. (2019).
\newblock {Top feasible arm identification}.
\newblock In {\em Proceedings of the 22nd International Conference on
  Artificial Intelligence and Statistics (AIStats)}.

\bibitem[Kaufmann and Garivier, 2017]{kaufmann2017survey}
Kaufmann, E. and Garivier, A. (2017).
\newblock {Learning the distribution with largest mean: two bandit frameworks}.
\newblock {\em ESAIM: Proceedings and Surveys}, 60:114--131.

\bibitem[Kaufmann and Koolen, 2018]{kaufmann2018mixture}
Kaufmann, E. and Koolen, W. (2018).
\newblock {Mixture martingales revisited with applications to sequential tests
  and confidence intervals}.
\newblock {\em arXiv preprint arXiv:1811.11419}.

\bibitem[Kaufmann et~al., 2012]{kaufmann2012thompson}
Kaufmann, E., Korda, N., and Munos, R. (2012).
\newblock {Thompson sampling: An asymptotically optimal finite-time analysis}.
\newblock In {\em Proceedings of the 23rd International Conference on
  Algorithmic Learning Theory (ALT)}.

\bibitem[Korda et~al., 2013]{korda2013thompson}
Korda, N., Kaufmann, E., and Munos, R. (2013).
\newblock {Thompson sampling for 1-dimensional exponential family bandits}.
\newblock In {\em Advances in Neural Information Processing Systems 27 (NIPS)},
  pages 1448--1456.

\bibitem[Kwon and Perchet, 2017]{kwon2017blackwell}
Kwon, J. and Perchet, V. (2017).
\newblock {Online learning and Blackwell approachability with partial
  monitoring: Optimal convergence rates}.
\newblock In {\em Proceedings of the 20th International Conference on
  Artificial Intelligence and Statistics (AIStats)}, volume~54.

\bibitem[Lai and Robbins, 1985]{lai1985}
Lai, T.-L. and Robbins, H. (1985).
\newblock {Asymptotically efficient adaptive allocation rules}.
\newblock {\em Advances in Applied Mathematics}, 6(1):4--22.

\bibitem[Lu et~al., 2019]{lu2019glb}
Lu, S., Wang, G., Hu, Y., and Zhang, L. (2019).
\newblock {Multi-objective generalized linear bandits}.
\newblock In {\em Proceedings of the 28th International Joint Conference on
  Artificial Intelligence (IJCAI)}, pages 3080--3086.

\bibitem[M{\'{e}}nard, 2019]{menard2019lma}
M{\'{e}}nard, P. (2019).
\newblock {Gradient ascent for active exploration in bandit problems}.
\newblock {\em arXiv preprint arXiv:1905.08165}.

\bibitem[Perchet, 2011]{perchet2011blackwell}
Perchet, V. (2011).
\newblock {Approachability of convex sets in games with partial monitoring}.
\newblock {\em Journal of Optimization Theory and Applications},
  149(3):665--677.

\bibitem[Perchet, 2014]{perchet2014blackwell}
Perchet, V. (2014).
\newblock {Approachability, regret and calibration: Implications and
  equivalences}.
\newblock {\em Journal of Dynamics and Games}, 1(2):181--254.

\bibitem[Qin et~al., 2017]{qin2017ttei}
Qin, C., Klabjan, D., and Russo, D. (2017).
\newblock {Improving the expected improvement algorithm}.
\newblock In {\em Advances in Neural Information Processing Systems 30 (NIPS)},
  pages 5381--5391.

\bibitem[Russo, 2016]{russo2016ttts}
Russo, D. (2016).
\newblock {Simple Bayesian algorithms for best arm identification}.
\newblock In {\em Proceedings of the 29th Annual Conference on Learning Theory
  (CoLT)}.

\bibitem[Shang et~al., 2020]{shang2020t3c}
Shang, X., de~Heide, R., Kaufmann, E., M{\'{e}}nard, P., and Valko, M. (2020).
\newblock {Fixed-confidence guarantees for Bayesian best-arm identification}.
\newblock In {\em Proceedings of the 23rd International Conference on
  Artificial Intelligence and Statistics (AIStats)}.

\bibitem[Thompson, 1933]{thompson1933}
Thompson, W.~R. (1933).
\newblock {On the likelihood that one unknown probability exceeds another in
  view of the evidence of two samples}.
\newblock {\em Biometrika}, 25(3/4):285.

\bibitem[Yu et~al., 2018]{yu2018heavy}
Yu, X., Shao, H., Lyu, M.~R., and King, I. (2018).
\newblock {Pure exploration of multi-armed bandits with heavy-tailed payoffs}.
\newblock In {\em Proceedings of the 34th Conference on Uncertainty in
  Artificial Intelligence (UAI)}.

\bibitem[Zeng et~al., 2016]{zeng2016online}
Zeng, W., Fang, M., Shao, J., and Shang, M. (2016).
\newblock {Uncovering the essential links in online commercial networks}.
\newblock {\em Scientific Reports}, 6.

\bibitem[Zuluaga et~al., 2013]{zuluaga2013}
Zuluaga, M., Krause, A., Sergent, G., and Puschel, M. (2013).
\newblock {Active learning for multi-criterion optimization}.
\newblock In {\em Proceedings of the 30th International Conference on Machine
  Learning (ICML)}.

\end{thebibliography}

\newpage
\onecolumn
\appendix

\section{More details on best-arm identification algorithm}\label{app:bai}
We provide an improved algorithm for best-arm identification. The idea is to view the problem again as a minimax game as for regret minimization, which is also a natural observation from the lower bound: given a bandit model $\bl$, at each time step a learner plays an arm, and a fictive opponent, tries to fool the learner by playing an alternative bandit model $\blambda$ with a different correct answer. Such framework allows to obtain algorithms that adapt to any structure with asymptotic optimality guarantees, and is extensively studied recently for best-arm identification~\citep{degenne2019game,degenne2019pure,menard2019lma,degenne2020game}.

In Algorithm~\ref{alg:bai}, we show one instance of such gamified sampling rule by~\cite{degenne2019pure}, adopted to our setting, along with the decision rule and the stopping rule we described in Section~\ref{sec:bai}. 

By applying the game scheme, we can actually approach the minimax saddle-point
\[
    \bomega^\star(\bl)=\argmax_{\bomega \in \Sigma_K}\inf_{\bm\lambda \in\Alt(\bl)} \left( \sum_{k\in [K]}\sum_{i\in [d]} \omega_k d(\loss{i}{k},\Alti{i}{k})\right)
\]
step by step by leveraging an iterative algorithm for both the real learner and the fictive opponent. 

We first need to implement a learner $\cL_\bomega^i$ for each possible answer $i\in\cI$, for which we can apply choose to use \AH again (as for regret minimization). Not to enter into details, \AH is a regret minimizing algorithm of the exponential weights family, that achieves a $\cO(\sqrt{T})$ regret for bounded losses~\citep{derooij2014hedge}. At each time step, we get a weight vector $\bomega_t$ from $\cL_\bomega^{i_t}$ where $i_t$ is the empirical best answer. However, a bandit algorithm cannot play a fraction vector. We can incorporate a tracking procedure (see line 20 in Algorithm~\ref{alg:bai}) to circumvent this difficulty. For the opponent learner, we choose to use the best response, that is the most confusing model as formalized in Line 14 of the pseudo-code.

The present procedure involves also an optimization problem, but simpler than that of \DT, and is computationally more feasible in practice (note that other combination of sub-algorithms for both opponents are possible, see e.g.~\citealt{degenne2019game,menard2019lma}).

\begin{algorithm}[t]
   \caption{A gamified algorithm for best-arm identification}
   \label{alg:bai}
\begin{algorithmic}[1]
   \STATE {\bfseries Input:} Learners for each possible answer $(\cL^i_\bomega)_{i\in\cI}$, threshold function $\beta(\cdot,\delta)$, exploration bonus $f(\cdot)$, number of forced exploration rounds $N$
   \STATE pull each arm $N$ rounds
   \FOR{$t > KN$}
    \STATE \textit{// Stopping rule}
    \IF{at time $t$, we have 
    \begin{align*}
        \max_{k\in[K]} \inf_{\blambda \in\Alt(\hat{\bl}_{t-1})} \left( \sum_{k\in [K]}\sum_{j\in [d]} N_{t-1,k} d(\hloss{j}{t-1,k},\Alti{j}{k})\right) > \beta(t-1,\delta)
    \end{align*}}
      \STATE {\bfseries stop} and {\bfseries return} 
      \STATE $\hi = \istar(\tilde{\bl}_{t-1})$  
    \ENDIF
    \STATE \textit{// Best answer}
    \STATE $i_t = \istar(\tilde{\bl}_{t-1})$
    \STATE \textit{// The learner plays}
    \STATE Get $\bomega_{t}$ from $\cL^{i_t}_\bomega$ and update $\bW_{t}=\bW_{t-1}+\bomega_{t}$
    \STATE \textit{// Best response from the nature}
    \STATE \[\blambda_{t} \in \argmin_{\blambda \in\Alt(\hat{\bl}_{t-1})} \left( \sum_{k\in [K]}\sum_{j\in [d]} \omega_{t,k} d(\hloss{j}{t-1,k},\Alti{j}{k})\right)\]
    \STATE \textit{// Feed optimistic losses}
    \STATE For $k\in[K]$, let
    \STATE \[U_{t,k}=\max\left(\frac{f(t-1)}{N_{t-1,k}},\max_{{\xi}\in\{c_{t,k},d_{t,k}\}}\sum_{j\in[d]}d(\xi,\Alti{j}{t,k})\right)\]
    \STATE Feed $-\sum_{k\in[K]}\omega_k U_{t,k}$ to learner $\cL_\bomega^{i_{t}}$
    \STATE \textit{// Track the weights}
    \STATE Pull $\cA_{t}\in \argmin_{k\in [K] } N_{t-1,k} - W_{t,k}$
   \ENDFOR
\end{algorithmic}
\end{algorithm}
\paragraph{Sample complexity.} 

In this paper, since we assume that our model $\bl \in [0,1]^{d\times K}$, there exists constants $\gamma_{\bl}$ and $D_{\bl}$, that only depend on the model $\bl$, such that for all $y\in[0,1]$, the function $x\mapsto d(x,y)$ is $\gamma_{\bl}$-Lipschitz on $[0,1]$ and $d(x,y)\leq D_{\bl}$ (see Appendix F of~\citealt{degenne2019game} for detailed discussions).

According to Theorem 2 of~\cite{degenne2019game}, the sample complexity of Algorithm~\ref{alg:bai} at the stopping time $\tau_\delta$ is bounded by $T_\delta$ as defined in Theorem~\ref{thm:bai}, which is a non-asymptotic bound that depends on regrets incurred by both the \AH learner and the best response learner. For \AH, the regret incurred is $R^{k}(t)=\sqrt{t\log(dK)}\log(t)$, and the best-response learner has zero-regret: $R^{\blambda}(t)\leq 0$. And with $\beta(t,\delta)\approx\log(1/\delta)+o(t)$, the asymptotic optimality of Algorithm~\ref{alg:bai} is also retained. We do not intend to reproduce the proof here since it can be (almost) adopted directly from the proof of Theorem 2 by~\cite{degenne2019game} (up to a factor of $d$).

\begin{theorem}\label{thm:bai}
The sample complexity of Algorithm~\ref{alg:bai} on model $\bl$ is
\[
    \E_\bl[\tau_{\delta}] \leq T_{\delta} + \texttt{CST}\,.
\]
The quantity $T_{\delta}$ is defined as
\begin{gather*}
    T_{\delta} \eqdef \max\left\{t\in\NN: t\leq \beta(t,\delta)/D_{\bl} +C_{\bl}(R^{\blambda}(t)+R^k(t)+\cO(t\log(t)))\right\}
\end{gather*}
where $C_{\bl}$\footnote{See Appendix D of~\cite{degenne2019game} for an exact definition.} depends on the model $\bl$.
\end{theorem}
\section{A simple algorithm for regret minimization and its analysis}\label{app:cp}

\subsection{A combinatorial lemma}

We now describe in detail an algorithm based on the proof idea of Theorem~\ref{thm:lb}. As we stressed, it is more viable to consider the proportion (weight) of arm pulls, in particular the optimal weight of arm pulls for regret minimization. To simplify the problem, we first show that the optimal distribution is a linear combination over $d$ arms. 



\begin{lemma}\label{lemma:comb}
In the case of $d$ losses, there exists a $\bomega^\star$ such that it has at most $d$ non-zero elements.
\end{lemma}
\begin{proof}
We first define the quadrant $H^+ \eqdef \left\{\bx|x_i\geq 0\right\}$ and we define an addition operation of two sets $A$ and $B$ as $A + B \eqdef \{a+b|a\in A, b\in B\}$. For any compact set $X$ we note that 
\[
    \inf\limits_{\bx\in X}\max(x_1,...,x_d) =  \inf\limits_{\by\in (X+H^+)\bigcap \operatorname{Diag}} y_1\,,
\]
where $\operatorname{Diag} \eqdef \left\{\textbf{x}|x_1 = x_2 = ... = x_d\right\}$. Let $A \eqdef \operatorname{Conv}\left(\left\{\blstar{}{k} \right\}_{k\in[K]}\right)$ be the convex hull over the relative losses of all $K$ arms. Then we have 
\[
    \wlstar{\bomega^\star} = \inf\limits_{\by\in (A+H^{+})\bigcap \operatorname{Diag}} y_1 \,.
\]
Therefore, there exists $\bomega\in A$ and $h\in H^+$ such that $\wlstar{\bomega^\star} = \bomega^\top \blstar{(i)}{}+h_i$ for all $i\in[d]$. Moreover, it is obvious that there is at least one $i\in [d]$ such that $h_i = 0$. Thus, here $\bomega$ is an optimal weight, i.e., $\wlstar{\bomega^\star}=\wlstar{\bomega}$. Furthermore, the vector $\bomega^\top \blstar{(i)}{}$ is not an interior point of $A$, since this would enable a $(-1,...,-1)$ direction translation, thus $\bomega^\top \blstar{(i)}{}$ is on a surface of $A$, a convex hull of finite points in a $d-1$ dimension space, we conclude that there exists such an $\bomega$ with at most $d$ non-zero elements.

\end{proof}

\subsection{A straightforward algorithm}


We assume that $K\geq d$. According to Lemma~\ref{lemma:comb}, we only need to consider linear combinations of $d$ arms. We define a combinatorial arm $\bc\in C$ as a set of $d$ arms where $C=\{\{c_1,...,c_d\}\subseteq [K]|c_1<c_2<\ldots<c_d\}$.  For all $\bc\in C$, we are interested in the quantity $\lstarC{(i)}{\bc} \triangleq \min_{\balpha\in \Sigma_d} \balpha^\top \blstar{(i)}{\bc}$, where $\blstar{(i)}{\bc} = (\lstar{i}{c_1}, \dots , \lstar{i}{c_d} )$ is the vector of the $i$-th relative losses of arm set $\bc$. We further denote by $\lstarC{}{\bc} = \max_{i\in [d]} \lstarC{(i)}{\bc}$ the $\ell^\infty$-norm of the relative loss and $\loss{i}{\bc}=\lstarC{(i)}{\bc} + \loss{i}{\star_i}$ the absolute loss of the combinatorial arm.

A straightforward idea of algorithm for regret minimization is to track the values of $\lstarC{}{\bc}$ for every $\bc\in C$. We thus need to have a good estimate of the relative loss of all $d$ combinations of arms. We propose \underline{C}ombinatorial \underline{P}lay (\CP) as shown in Algorithm~\ref{alg:rm}. The empirical relative loss of arm $\bc\in C$ w.r.t. $\hat{\star}_i$ is defined as
\begin{align}
    \hhwlstar{\bc} = \max_{i\in[d]}\min_{\balpha\in\Sigma_d}  \balpha^\top \hblhstar{(i)}{\bc} \,.\label{eq:eprl}
\end{align}
Let $\hat{\balpha}_{\bc}$ denote the value of $\balpha$.

Our algorithm thus chooses among $\bc \in C $ and calculates the empirical optimal allocation $\hat{\balpha}_\bc \in \Sigma_d$ among $\bc$. Then we use the tracking procedure from the literature (see e.g.~\citealt{garivier2016tracknstop}) to decide which real arm to pull.

\begin{algorithm}[ht] \caption{The algorithm of \CP}\label{alg:rm}
{\begin{algorithmic}[1]
\STATE {\bfseries Input:} time horizon $T$ and number of forced exploration rounds $N$
\STATE pull each arm $N$ rounds
\STATE $\hat{\star}_i = \argmin_{k\in[K]} \hloss{i}{k}$ for $i\in [d]$
\STATE for all $\bc\in C$, we calculate its estimate $\hhwlstar{\bc}$ and its optimal allocation $\hat{\balpha}_{\bc}$ based on Eq.~\eqref{eq:eprl}
\STATE $\hat{\bc} \in \argmin_{\bc\in C}\hhwlstar{\bc}$ and the corresponding optimal allocation $\hat{\balpha}_{\hat{\bc}}$.
\FOR{$t=KN+1,\cdots,T$}
	\STATE Pull arm $\cA_t$ according to probability distribution $\hat{\balpha}_{\hat{\bc}}$ over ${\hat{\bc}}$.
\ENDFOR
\end{algorithmic}}
\end{algorithm}

\subsection{Analysis of \CP}\label{sec:lb.cp}
We analyze \CP in this section. Our main result is stated below.
\begin{restatable}{theorem}{restatecp}\label{thm:cp}
Assume that $\hloss{i}{k}\in[0,1]$ for all $k\in[K]$ and $i\in [d]$, \CP achieves a $\tcO(T^{2/3})$ regret.
\end{restatable}
\begin{proof}
First, according to Lemma~\ref{lemma:conc}, we have $\PP{E_{1,t}}\geq 1-dK/t^2$. When $E_{1,t}$ holds, we have for all $\bc\in C$
\begin{align*}
\hhwlstar{\bc} = \max_{i\in[d]}\min_{\balpha\in\Sigma_d}  \balpha^\top  \hblhstar{(i)}{\bc}   \geq \max_{i\in[d]}\min_{\alpha\in\Sigma_d}    \left(\balpha^\top  \blstar{(i)}{\bc}   - 2\sqrt{\frac{2\log(t)}{N}}\right) =   \lstarC{}{\bc} - 2\sqrt{\frac{2\log(t)}{N}}\,.
\end{align*}
And similarly, we have
\begin{align*}
\hhwlstar{\bc} \leq  \lstarC{}{\bc} + 2\sqrt{\frac{2\log(t)}{N}}\,.
\end{align*}

Then the regret is
\begingroup
\allowdisplaybreaks
\begin{align*}
    \EE{R_{\CP}(T)} &\leq KN +\sum\limits_{t=KN+1}^{T} \EE{ \II{\neg E_{1,t}}} +  (\wlstar{\bc}-\wlstar{\bomega^\star})(T-KN)\EE{\II{\forall t, E_{1,t}}} \\
    &\leq KN + \sum\limits_{t=KN+1}^{T} \frac{dK}{t^2} + (\hhwlstar{\bc}- \min_{\bc} \lstarC{}{\bc})(T-KN)\EE{\II{\forall t, E_{1,t}}}   \\
    &+ 2\sqrt{\frac{2\log(t)}{N}} (T-KN)\EE{\II{\forall t, E_{1,t}}} \\
    &\leq KN + \frac{d}{N} +  \left(\hhwlstar{\bc}- \min_{\bc} \left(\hhwlstar{\bc} - 2\sqrt{\frac{2\log(t)}{N}} \right)  \right)(T-KN)\EE{\II{\forall t, E_{1,t}}} \\
    &+  2\sqrt{\frac{2\log(t)}{N}} (T-KN)\EE{\II{\forall t, E_{1,t}}} \\
    &\leq KN + \frac{d}{N} + 4\sqrt{\frac{2\log(T)}{N}} (T-KN)\:,
\end{align*}
\endgroup
which completes the proof by setting $N= ({32T^2\log(T)}/{K^2})^{\frac{1}{3}}$.
\end{proof}

\end{document}